\documentclass[11pt]{article}

\usepackage[utf8]{inputenc}
\usepackage[T1]{fontenc}
\usepackage{amsmath, amssymb, amsthm, mathtools}
\usepackage{geometry}
\usepackage{natbib}
\usepackage{hyperref}

\newtheorem{theorem}{Theorem}[section]
\newtheorem{lemma}[theorem]{Lemma}

\newcommand{\RR}{\mathbb{R}}
\newcommand{\EE}{\mathbb{E}}
\newcommand{\PP}{\mathbb{P}}
\newcommand{\NN}{\mathbb{N}}
\newcommand{\HH}{\mathcal{H}}

\newcommand{\BB}{\mathcal{B}}

\newcommand{\norm}[1]{\left\|#1\right\|}
\newcommand{\abs}[1]{\left|#1\right|}
\newcommand{\ip}[2]{\left\langle #1,#2\right\rangle}
\newcommand{\one}{\mathbf{1}}
\DeclareMathOperator{\op}{op}
\DeclareMathOperator{\diag}{diag}

\DeclareMathOperator{\Var}{Var}

\DeclareMathOperator{\Span}{span}
\DeclareMathOperator{\supp}{supp}
\DeclareMathOperator*{\argmin}{arg\,min}

\title{When Kernel Ridge Regression Meets the H\"older-Zygmund Class: Minimax Optimality and Failure of Properness}
\author{Yuxuan Hou}

\begin{document}

\maketitle

\section{Introduction}
Nonparametric regression is one of the most important topics in statistics, and the minimax rate of learning is one of the central topics in nonparametric regression. Two of the most common function classes are Reproducing Kernel Hilbert Spaces (RKHSs) and H\"older classes. For RKHSs, upper bounds are usually dealt with using spectral algorithms \cite{zhang2024optimality,steinwart2008support}, and for H\"older classes, upper bounds are usually dealt with using Local Polynomial Regression \cite{stone1982optimal,chhor2022benign}.

Recently, people have been developing methods for using spectral algorithms while doing nonparametric regression when the true function lies in a H\"older class, or more generally a Besov class \cite{hamm2021adaptive,eberts2011optimal}, showing that spectral algorithms using Gaussian kernels are minimax optimal up to a log factor. However, there are few works investigating other types of kernels.

The embedding from H\"older spaces to Sobolev spaces, $C^s(\Omega)\to H^{s-\epsilon}(\Omega)$, where $\epsilon>0$ is arbitrarily small and $\Omega$ is a bounded region in $\RR^d$, directly shows that any estimator, usually spectral algorithms, that is optimal on a Sobolev class gives the minimax upper bound of $n^{-\frac{s-\epsilon}{2(s-\epsilon)+d}}$. A natural question arises: are spectral algorithms strictly minimax optimal in the H\"older class?

In this paper, we answer the question affirmatively. We also point out that the properness of spectral algorithms, although successful in Sobolev classes, fails in H\"older classes.

\section{Minimax Optimality}
We consider i.i.d. observation pairs $\{(x_i, y_i)\}_{i=1}^n$ drawn from an unknown distribution $\rho$ on $[0,1]^d\times \RR$. The relationship is governed by
\begin{equation*}
    y_i = f^*(x_i) + \epsilon_i
\end{equation*}
where $f^*(x) = \EE[Y \mid X = x]$ is the target regression function and $\epsilon_i$ are i.i.d. $\sigma$-sub-Gaussian noise variables. We assume that the density of $X$ is bounded above and below, say $0<c<f_X(x)<C$.

We assume that $f^* \in B^s_{\infty,\infty}([0,1]^d)$, the H\"older--Zygmund class of order $s$. When $s\notin\NN$, this is equivalent to the classical H\"older class $C^s$; when $s\in\NN$, it is the corresponding Zygmund class.

We employ kernel ridge regression with a kernel $K: [0,1]^d \times [0,1]^d \to \RR$ whose associated reproducing kernel Hilbert space (RKHS) $\HH$ is equivalent to the Sobolev space $H^m([0,1]^d)$, where $m = s + \frac{d}{2}$. The KRR estimator $\hat{f}_\lambda$ is the solution to
\begin{equation*}
    \hat{f}_\lambda
    = \arg\min_{f \in \HH}
    \frac{1}{n} \sum_{i=1}^n (f(x_i) - y_i)^2
    + \lambda \|f\|_\HH^2.
\end{equation*}
We want to show that, by choosing $\lambda$ appropriately, the misspecified KRR can achieve the minimax rate.

\begin{theorem}
    There exists a constant $C = C(\sigma, s, d, F)$ such that
    \begin{align*}
        \sup_{\|f^*\|_{B^s_{\infty,\infty}} \leq 1}
        \EE\left[\|\hat{f}_\lambda - f^*\|_{L^2}^2\right]
        \leq C n^{-\frac{2s}{d + 2s}}.
    \end{align*}
\end{theorem}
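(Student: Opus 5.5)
We will prove the theorem with the standard bias--variance decomposition for kernel ridge regression, taking $\lambda \asymp n^{-\frac{2m}{2s+d}}$. With $m = s+\frac d2$ this gives $\lambda^{s/m} \asymp n^{-\frac{2s}{2s+d}}$. The RKHS $\HH \simeq H^m$ has Mercer eigenvalues $\mu_j \asymp j^{-2m/d}$ with respect to $L^2(\rho_X)$; this holds because the density of $X$ is bounded above and below. Let $T$ be the integral operator and $f_\lambda = (T+\lambda)^{-1}Tf^*$ the population regularized solution. We split
\begin{equation*}
\EE\|\hat f_\lambda - f^*\|_{L^2}^2 \le 2\|f_\lambda - f^*\|_{L^2}^2 + 2\EE\|\hat f_\lambda - f_\lambda\|_{L^2}^2 .
\end{equation*}

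\textbf{Bias.} We will not use the lossy embedding $B^s_{\infty,\infty}\subset H^{s-\epsilon}$. Instead we use the exact real interpolation identity $B^s_{2,\infty} = (L^2, H^m)_{s/m,\infty}$, which holds for the Lipschitz domain $[0,1]^d$ via an extension operator. On a bounded domain, $B^s_{\infty,\infty}\subset B^s_{2,\infty}$. Membership in the $(\cdot,\cdot)_{\theta,\infty}$ interpolation space is equivalent to a K-functional bound $K(t,f^*)\lesssim t^{\theta}$, that is, for every $t>0$ there is a $g_t\in \HH$ with
\begin{equation*}
\|f^*-g_t\|_{L^2} + t\|g_t\|_{\HH}\lesssim t^{s/m}.
\end{equation*}
Since $\|f_\lambda - f^*\|_{L^2}^2 + \lambda\|f_\lambda\|_\HH^2 = \min_{g\in\HH}\big(\|g-f^*\|_{L^2}^2+\lambda\|g\|_\HH^2\big)$, this gives $\|f_\lambda-f^*\|_{L^2}^2 \lesssim \lambda^{s/m}$. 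It also gives the auxiliary bound $\|f_\lambda\|_{\HH}^2\lesssim \lambda^{s/m-1}$.

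\textbf{Variance.} This is the main obstacle, because $f^*\notin\HH$ (misspecification). We follow the integral-operator approach of Fischer--Steinwart and Zhang et al.\ \cite{zhang2024optimality}. We write $\hat f_\lambda - f_\lambda = (T_X+\lambda)^{-1}\big(g_Z - (T_X+\lambda)f_\lambda\big)$ and use concentration in the form $\|(T+\lambda)^{1/2}(T_X+\lambda)^{-1}(T+\lambda)^{1/2}\|\le 2$ with high probability. That concentration requires $n \gtrsim \mathcal N_\infty(\lambda)\log$, where $\mathcal N_\infty(\lambda)\lesssim \lambda^{-d/(2m)}$ by the embedding $H^m\hookrightarrow L^\infty$ (valid since $m>d/2$). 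What remains is to bound the noise term and the approximation-error fluctuation term.

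\begin{itemize}
\item The noise term is $\sigma^2\mathcal N(\lambda)/n$, with $\mathcal N(\lambda)\asymp\lambda^{-d/(2m)}$. This equals $n^{-\frac{2s}{2s+d}}$ at our choice of $\lambda$.
\item The fluctuation term needs an $L^\infty$ bound on $f^*-f_\lambda$. Here the Hölder assumption is genuinely helpful: $f^*$ is bounded because $\|f^*\|_{L^\infty}\le\|f^*\|_{B^s_{\infty,\infty}}\le 1$. For $f_\lambda$ we interpolate between the two auxiliary bounds using Gagliardo--Nirenberg, $\|g\|_\infty\lesssim\|g\|_{L^2}^{1-d/(2m)}\|g\|_{H^m}^{d/(2m)}$. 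This gives $\|f_\lambda\|_\infty \lesssim \lambda^{(s-d/2)/(2m)}$ plus a constant, which is at worst a growing power; it is controlled by a Bernstein inequality together with $\mathcal N_\infty(\lambda)/n\to0$.
\end{itemize}

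Plugging in yields a variance of order $\lambda^{-d/(2m)}/n$, up to these lower-order terms. If a term does not close, we would instead bound $\|f_\lambda\|_\infty$ via the Hölder-to-$L^\infty$ stability of the smoother $(T+\lambda)^{-1}T$, which acts like a convolution at scale $\lambda^{1/(2m)}$.

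\textbf{Failure event and conclusion.} On the low-probability complement of the concentration event we use a crude bound: $\hat f_\lambda$ is a linear smoother, so $\EE\|\hat f_\lambda\|^2$ is at most polynomial in $n$, and the failure probability can be made $n^{-k}$. Combining the bias, the variance and this bound gives the rate $Cn^{-\frac{2s}{d+2s}}$, uniformly over $\|f^*\|_{B^s_{\infty,\infty}}\le 1$.
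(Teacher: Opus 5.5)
Your proposal is correct and follows essentially the paper's route. The bias is controlled through $\|f^*-f_\lambda\|_{L^2}^2\le\inf_{g\in\HH}\big(\|f^*-g\|_{L^2}^2+\lambda\|g\|_\HH^2\big)$ with $\lambda=1/n$, where the paper constructs your K-functional near-minimizer explicitly as the wavelet truncation of $f^*$ at $N=n^{d/(2s+d)}$ coefficients using $|a_i|\lesssim i^{-(2s+d)/(2d)}$, and the estimation error is handled by the Zhang--Li--Lin integral-operator concentration.

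One minor point: your Gagliardo--Nirenberg exponent $(s-d/2)/(2m)$ is what you would get for $f_\lambda-g_{\sqrt\lambda}$, not for $f_\lambda$ itself; applied to $f_\lambda$ directly the inequality gives $\lambda^{-d^2/(8m^2)}$. Nothing depends on this, because for an expectation bound the fluctuation term only needs $\EE\|\xi\|_\HH^2\lesssim\sigma^2\mathcal N(\lambda)+\mathcal N_\infty(\lambda)\|f^*-f_\lambda\|_{L^2}^2$, where $\xi=(T+\lambda)^{-1/2}(y-f_\lambda(x))k_x$, and this requires no $L^\infty$ control of $f_\lambda$ at all.
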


\subsection{Preliminaries on Wavelets}
Using the standard boundary-corrected wavelet construction, for a fixed integer $M$, there exist an orthonormal wavelet basis $\{e_i : i \in \mathbb{N}\}$ of $L^2([0,1]^d)$ and $J \in \mathbb{N}^+$ such that, for
\[
    f = \sum_{i=1}^\infty a_i e_i,
    \qquad
    a_i = \langle f, e_i \rangle_{L^2([0,1]^d)},
\]
and any $s < M$, the H\"older--Zygmund norm and Sobolev norm can be characterized by the wavelet coefficients as
\begin{align*}
    \|f\|_{B^s_{\infty,\infty}([0,1]^d)}
    &\asymp
    \max_{1 \leq i \leq 2^{Jd}} |a_i|
    +
    \sup_{\ell \geq J}
    2^{\ell(s + d/2)}
    \max_{2^{\ell d} < i \leq 2^{(\ell + 1)d}}
    |a_i|,
\end{align*}
and
\begin{align*}
    \|f\|_{H^s([0,1]^d)}
    &\asymp
    \left(
    \sum_{i=1}^{2^{Jd}} |a_i|^2
    +
    \sum_{\ell \geq J}
    2^{2\ell s}
    \sum_{2^{\ell d} < i \leq 2^{(\ell + 1)d}}
    |a_i|^2
    \right)^{1/2}.
\end{align*}
Here $\asymp$ denotes equivalence of norms, with constants independent of $f$. We enumerate the basis by increasing resolution as follows:
\[
    e_1, \ldots, e_{2^{Jd}}
\]
are the scaling functions at the initial resolution $J$, and for every $\ell \geq J$,
\[
    e_{2^{\ell d} + 1}, \ldots, e_{2^{(\ell + 1)d}}
\]
are all the wavelets at resolution level $\ell$.

For reference, see \cite{GineNickl2016}.
\subsection{Proof of Optimality}
Denote the population solution by
\begin{align*}
    f_\lambda
    = \argmin_{f \in \HH}
    \|f^* - f\|^2_{L^2} + \lambda \|f\|^2_{\HH},
\end{align*}
where $\lambda = \frac{1}{n}$. Define $T_X : \HH \to \HH$ and $g_Z$ by
\begin{align*}
    T_X = \frac{1}{n} \sum_{i=1}^n k_{x_i} \otimes k_{x_i},
    \qquad
    g_Z = \frac{1}{n}\sum_{i=1}^n y_i k_{x_i}.
\end{align*}
The solution satisfies $f_\lambda = (T + \lambda)^{-1}T f^*$.

We consider the approximation error $\|f^* - f_\lambda\|^2_{L^2}$. 
\begin{align*}
    \|f^* - f_\lambda\|^2_{L^2}\leq \inf_{g\in \HH} \|f-g\|_{L^2}^2+\lambda\|g\|^2_\HH
\end{align*}
We proceed by choosing $g$ appropriately. Suppose $f=\sum_{i=1}^\infty a_ie_i$ where $|a_i|\lesssim i^{-\frac{d+2s}{2d}}$. Take $g=\sum_{i=1}^N a_ie_i$ where $N=n^{\frac{d}{2s+d}}$.
\begin{align*}
    \|g\|^2_\HH\lesssim \sum_{i=1}^N i^{\frac{2s}{d}} a_i^2\lesssim \sum_{i=1}^N i^{\frac{2s+d}{d}}i^{-\frac{d+2s}{d}}=N
\end{align*}
\begin{align*}
    \|f-g\|_{L^2}^2\lesssim\sum_{i=N+1}^\infty i^{-\frac{2s+d}{d}}\lesssim N^{-\frac{2s}{d}}
\end{align*}
Combining the two inequalities above, we obtain the final approximation error bound
\begin{align*}
    \|f^* - f_\lambda\|^2_{L^2}\leq \inf_{g\in \HH} \|f-g\|_{L^2}^2+\lambda\|g\|^2_\HH\lesssim N^{-\frac{2s}{d}}+N\lambda\asymp n^{-\frac{2s}{2s+d}}.
\end{align*}
We consider the estimation error $\|\hat{f}_\lambda - f_\lambda\|_{L^2}$. It can be bounded by
\begin{align*}
    &\left\|T^{\frac{1}{2}}(T + \lambda)^{-\frac{1}{2}}\right\|_{\BB(\HH,\HH)}
    \left\|(T + \lambda)^{\frac{1}{2}}(T_X + \lambda)^{-\frac{1}{2}}\right\|_{\mathcal{B}(\HH,\HH)} \\
    &\qquad \times
    \left\|(T_X + \lambda)^{\frac{1}{2}}(\hat{f}_\lambda - f_\lambda)\right\|_\HH.
\end{align*}

We introduce the following lemma from \cite{zhang2024optimality}.

\begin{lemma}
    For the chosen $\lambda$ and $n$, with probability at least $1 - \delta$, we have
    \[
        \left\| (T+\lambda)^{-\frac{1}{2}}( T_X+\lambda)^{\frac{1}{2}} \right\|^2 \leq 2,
        \qquad
        \left\| (T+\lambda)^{\frac{1}{2}}(T_X+\lambda)^{-\frac{1}{2}} \right\|^2 \leq 3.
    \]
\end{lemma}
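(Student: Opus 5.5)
The plan is the standard operator-concentration argument, reduced to a single bound on the "whitened" perturbation
\[
A := (T+\lambda)^{-\frac12}(T - T_X)(T+\lambda)^{-\frac12}.
\]
Both claimed inequalities follow once $\|A\|_{\BB(\HH,\HH)} \le \tfrac12$. Indeed,
\[
(T+\lambda)^{-\frac12}(T_X+\lambda)(T+\lambda)^{-\frac12} = I - A .
\]
Therefore
\[
\bigl\|(T+\lambda)^{-\frac12}(T_X+\lambda)^{\frac12}\bigr\|^2 = \|I-A\| \le \tfrac32 \le 2 .
\]
Moreover, $I-A \succeq \tfrac12 I$, so
\[
\bigl\|(T+\lambda)^{\frac12}(T_X+\lambda)^{-\frac12}\bigr\|^2 = \|(I-A)^{-1}\| \le 2 \le 3 .
\]
So the work is to show $\|A\|\le \tfrac12$ with probability at least $1-\delta$, for $\lambda = 1/n$ and $n$ large enough depending on $\delta$.

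To bound $A$, I write it as an average of i.i.d. centered self-adjoint operators:
\[
A = \frac1n\sum_{i=1}^n \bigl(\EE\,\xi - \xi_i\bigr),
\qquad
\xi_i = h_{x_i}\otimes h_{x_i},
\qquad
h_x = (T+\lambda)^{-\frac12}k_x .
\]
I then apply a Bernstein inequality for self-adjoint Hilbert–Schmidt operators with intrinsic dimension (e.g.\ Tropp/Minsker, as used in \cite{zhang2024optimality}). It needs two quantities.
\begin{enumerate}
\item The uniform bound
\[
M_\lambda := \sup_x \|h_x\|_\HH^2 = \sup_x \sum_j \frac{\mu_j}{\mu_j+\lambda}\, e_j(x)^2 .
\]
\item The effective dimension
\[
\mathcal N(\lambda) = \operatorname{tr}\bigl(T(T+\lambda)^{-1}\bigr).
\]
\end{enumerate}
The variance proxy is controlled by $\EE\,\xi^2 \preceq M_\lambda\, T(T+\lambda)^{-1}$. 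Bernstein then gives, with probability at least $1-\delta$,
\[
\|A\| \lesssim \frac{M_\lambda B}{n} + \sqrt{\frac{M_\lambda B}{n}},
\qquad
B = \log\frac{4\mathcal N(\lambda)(\|T\|+\lambda)}{\delta\,\|T\|}.
\]

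It remains to estimate $\mathcal N(\lambda)$ and $M_\lambda$ for $\HH \cong H^m([0,1]^d)$ with $m = s + d/2$, using the density bounds $c<f_X<C$.
\begin{itemize}
\item By norm equivalence and the bounded density, the eigenvalues of $T$ satisfy $\mu_j \asymp j^{-2m/d} = j^{-(2s+d)/d}$. Hence $\mathcal N(\lambda) \asymp \lambda^{-d/(2m)}$.
\item For $M_\lambda$ I use the embedding property of Sobolev spaces: $[\HH]^\alpha \cong H^{\alpha m} \hookrightarrow L^\infty$ for every $\alpha > d/(2m)$. This gives $M_\lambda \lesssim \lambda^{-\alpha}$.
\end{itemize}
With $\lambda = 1/n$, I get $M_\lambda \lesssim n^{\alpha}$. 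I choose $\alpha$ strictly between $d/(2m)$ and $1$, which is possible since $s>0$. Also $B \lesssim \log(n/\delta)$. Then
\[
\frac{M_\lambda B}{n} \lesssim n^{\alpha-1}\log(n/\delta) \to 0 ,
\]
so $\|A\|\le\tfrac12$ for all $n\ge n_0(\delta,s,d)$. Small $n$ can be absorbed into the constant $C$ of the theorem.

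The main obstacle is the $L^\infty$-embedding step, i.e.\ the bound $M_\lambda \lesssim \lambda^{-\alpha}$ with $\alpha<1$. This step also requires the uniform boundedness of the kernel that the interpolation argument relies on. Everything else is a routine invocation of the concentration inequality.
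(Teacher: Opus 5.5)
Your argument is correct and is essentially the standard proof behind the result the paper imports without proof from \cite{zhang2024optimality}: operator Bernstein with intrinsic dimension applied to $\|(T+\lambda)^{-1/2}(T-T_X)(T+\lambda)^{-1/2}\|$, using $\mathcal N(\lambda)\asymp\lambda^{-d/(2m)}$ and the embedding bound $M_\lambda\lesssim\lambda^{-\alpha}$ with $d/(2m)<\alpha<1$, so that the bound holds for $n\ge n_0(\delta)$. The constants $2$ and $3$ in the statement correspond to a threshold of $2/3$ on this whitened perturbation, whereas your threshold $1/2$ gives the sharper constants $3/2$ and $2$.
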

By the lemma, the first two factors are bounded above by constants.
\begin{align*}
    &\left\|(T_X + \lambda)^{-\frac{1}{2}}(g_Z - (T_X + \lambda)f_\lambda)\right\|_\HH \\
    &\quad =
    \left\|(T_X + \lambda)^{-\frac{1}{2}}((g_Z - T_X f_\lambda) - (T f^* - T f_\lambda))\right\|_\HH \\
    &\quad \leq
    \left\|(T_X + \lambda)^{-\frac{1}{2}}(T + \lambda)^{\frac{1}{2}}\right\|_{\BB(\HH,\HH)}
    \left\|(T + \lambda)^{-\frac{1}{2}}((g_Z - T_X f_\lambda) - (T f^* - T f_\lambda))\right\|_\HH.
\end{align*}
Let $\xi_i = (T + \lambda)^{-\frac{1}{2}}(y_i k_{x_i} - f_\lambda(x_i) k_{x_i})$. We aim to prove its concentration. We use the following lemma from \cite{zhang2024optimality}.
\begin{lemma}
    For every $s'<\frac{2s}{2s+d}$, $\alpha>\frac{d}{2s+d}$, and $\delta\in(0,1)$, with probability at least $1-\delta$,
    \begin{align*}
        \|\frac{1}{n}\sum_{i=1}^n\xi_i-\EE\xi\|_\HH\lesssim \log(\frac{2}{\delta})(\frac{\lambda^{-\alpha+\frac{s'}{2}}}{n}+\frac{\lambda^{-\frac{d}{4s+2d}}}{\sqrt{n}}+\frac{\lambda^{\frac{s'-\alpha}{2}}}{\sqrt{n}})
    \end{align*}
\end{lemma}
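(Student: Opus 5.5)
The plan is to derive the bound from Bernstein's inequality for i.i.d.\ random variables in a separable Hilbert space. That inequality says: if $\EE\|\xi-\EE\xi\|_\HH^m\le \frac12 m!\,\sigma_0^2 L^{m-2}$ for all $m\ge 2$, then with probability at least $1-\delta$,
\begin{align*}
\Bigl\|\frac1n\sum_{i=1}^n\xi_i-\EE\xi\Bigr\|_\HH\le 4\log\Bigl(\frac2\delta\Bigr)\Bigl(\frac{L}{n}+\frac{\sigma_0}{\sqrt n}\Bigr).
\end{align*}
The work is therefore to identify $L$ and $\sigma_0$. I would write $y_i-f_\lambda(x_i)=\epsilon_i+(f^*(x_i)-f_\lambda(x_i))$ and split $\xi_i=\xi_i^{(1)}+\xi_i^{(2)}$, where
\begin{align*}
\xi_i^{(1)}=\epsilon_i(T+\lambda)^{-\frac12}k_{x_i},\qquad \xi_i^{(2)}=(f^*-f_\lambda)(x_i)(T+\lambda)^{-\frac12}k_{x_i}.
\end{align*}
I would then treat the two pieces separately and add the resulting bounds by a union bound, replacing $\delta$ by $\delta/2$.

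\textbf{Tools.} Three facts about $\HH\simeq H^{m}$ with $m=s+d/2$ are needed, all standard via the wavelet characterization above, since the eigenvalues of $T$ decay like $i^{-2m/d}$.
\begin{itemize}
\item \emph{Embedding index.} For every $\alpha>\frac{d}{2m}=\frac{d}{2s+d}$, $[\HH]^\alpha\hookrightarrow L^\infty$. This gives $\sup_x\|(T+\lambda)^{-1/2}k_x\|_\HH^2\lesssim\lambda^{-\alpha}$.
\item \emph{Effective dimension.} $\EE_x\|(T+\lambda)^{-1/2}k_x\|_\HH^2=\mathrm{tr}\bigl(T(T+\lambda)^{-1}\bigr)\asymp\lambda^{-d/(2m)}=\lambda^{-d/(2s+d)}$.
\item \emph{Source condition.} $B^s_{\infty,\infty}\hookrightarrow H^{s-\varepsilon}=[\HH]^{(s-\varepsilon)/m}$, so $f^*\in[\HH]^{s'}$ for every $s'<\frac{s}{m}=\frac{2s}{2s+d}$.
\end{itemize}
Combining the source condition with the standard spectral computations gives
\begin{align*}
\|f^*-f_\lambda\|_{L^2}^2\lesssim\lambda^{s'},\qquad \|f^*-f_\lambda\|_{L^\infty}\lesssim\|f^*-f_\lambda\|_{[\HH]^\alpha}\lesssim\lambda^{(s'-\alpha)/2}.
\end{align*}

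\textbf{Noise piece.} Conditioning on $x_i$, the sub-Gaussian moments give $\EE|\epsilon|^m\le \frac12 m!\,(c\sigma)^m$. Hence
\begin{align*}
\EE\|\xi^{(1)}\|^m\le\tfrac12 m!\,(c\sigma)^m\bigl(\lambda^{-\alpha/2}\bigr)^{m-2}\,\lambda^{-d/(2s+d)}.
\end{align*}
So $\sigma_0\asymp\lambda^{-d/(4s+2d)}$ and $L\asymp\lambda^{-\alpha/2}$. The contribution $\lambda^{-\alpha/2}/n$ is dominated by the other terms for $\lambda=1/n$, because $\alpha<1$ and $s'>0$ may be taken close to its limit. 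The main term is $\lambda^{-d/(4s+2d)}/\sqrt n$.

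\textbf{Bias piece.} This piece is bounded: $\|\xi^{(2)}\|\le\lambda^{-\alpha/2}\|f^*-f_\lambda\|_\infty\lesssim\lambda^{-\alpha+s'/2}=:L$. Its second moment is
\begin{align*}
\EE\|\xi^{(2)}\|^2\le\lambda^{-\alpha}\|f^*-f_\lambda\|_{L^2}^2\lesssim\lambda^{s'-\alpha}=:\sigma_0^2.
\end{align*}
Bernstein then produces the two remaining terms $\lambda^{-\alpha+s'/2}/n$ and $\lambda^{(s'-\alpha)/2}/\sqrt n$.

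\textbf{Main obstacle.} I expect the hardest step to be the $L^\infty$ bound $\|f^*-f_\lambda\|_\infty\lesssim\lambda^{(s'-\alpha)/2}$. It requires the interpolation-space embedding $[\HH]^\alpha\hookrightarrow L^\infty$ on the bounded domain $[0,1]^d$, and it must also hold in the regime $s'<\alpha$, where $f^*$ itself need not lie in $[\HH]^\alpha$. My first attempt would be to write
\begin{align*}
\|f^*-f_\lambda\|_{[\HH]^\alpha}^2=\sum_i\Bigl(\frac{\lambda}{\mu_i+\lambda}\Bigr)^2\mu_i^{-\alpha}a_i^2,
\end{align*}
where $\mu_i$ are the eigenvalues of $T$ and $a_i$ the coefficients of $f^*$ in the corresponding eigenbasis. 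Since $\sum_i \mu_i^{-s'}a_i^2<\infty$, bounding $\sup_{\mu}\lambda^2\mu^{s'-\alpha}(\mu+\lambda)^{-2}\lesssim\lambda^{s'-\alpha}$ should settle this. If $f^*\notin[\HH]^\alpha$ causes trouble with this route, I would instead invoke the $L^\infty$-boundedness of $f^*$, which holds because $B^s_{\infty,\infty}\subset L^\infty$, as in \cite{zhang2024optimality}.
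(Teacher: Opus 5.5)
Your architecture matches the argument behind the result the paper imports from Zhang--Li--Lin; the paper itself gives no proof. You use Hilbert-space Bernstein, a noise/bias split, $\sup_x\|(T+\lambda)^{-1/2}k_x\|_\HH^2\lesssim\lambda^{-\alpha}$ from the embedding index, $\mathcal N(\lambda)\asymp\lambda^{-d/(2s+d)}$, and $\|f^*-f_\lambda\|_{L^2}^2\lesssim\lambda^{s'}$, and these produce exactly the three terms of the statement.

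The gap is the step you flagged, and your primary route for it does not work. For $s'<\alpha$ the claimed bound $\sup_{\mu>0}\lambda^2\mu^{s'-\alpha}(\mu+\lambda)^{-2}\lesssim\lambda^{s'-\alpha}$ is false, because the expression tends to $+\infty$ as $\mu\to0$. This is not just a lossy estimate. For $\mu_i\le\lambda$ the filter $\lambda/(\mu_i+\lambda)$ is at least $1/2$, so in your notation $\|f^*-f_\lambda\|_{[\HH]^\alpha}^2\ge\frac14\sum_{\mu_i\le\lambda}\mu_i^{-\alpha}a_i^2=\infty$ whenever $f^*\notin[\HH]^\alpha\cong H^{\alpha(s+d/2)}$. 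That happens inside the unit ball of $B^s_{\infty,\infty}$ as soon as $\alpha\ge 2s/(2s+d)$; take all level-$\ell$ wavelet coefficients of size $2^{-\ell(s+d/2)}$. When $2s\le d$ this covers every admissible $\alpha$, and every admissible pair then has $s'<\alpha$. In particular, for $2s<d$ the limit $\alpha\downarrow d/(2s+d)$, $s'\uparrow 2s/(2s+d)$ taken right after the lemma lies in this regime. So, as written, your main line only covers $s'\ge\alpha$.

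The fallback you mention is the correct repair, but it must be applied to $f_\lambda$ alone, not to the difference. Write $\|f^*-f_\lambda\|_\infty\le\|f^*\|_\infty+\|f_\lambda\|_\infty$, with $\|f^*\|_\infty\lesssim\|f^*\|_{B^s_{\infty,\infty}}\le1$. For $f_\lambda$ the filter is $\mu/(\mu+\lambda)$, which kills the small eigenvalues:
\[
\|f_\lambda\|_{[\HH]^\alpha}^2=\sum_i\frac{\mu_i^{2+s'-\alpha}}{(\mu_i+\lambda)^2}\,\mu_i^{-s'}a_i^2\le\sup_{\mu>0}\frac{\mu^{2+s'-\alpha}}{(\mu+\lambda)^2}\,\|f^*\|_{[\HH]^{s'}}^2\lesssim\lambda^{s'-\alpha}.
\]
The supremum is finite because $0\le\alpha-s'\le2$. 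You may take $\alpha\le1$ without loss, since the right-hand side of the lemma is increasing in $\alpha$ for $\lambda\le1$. The embedding $[\HH]^\alpha\hookrightarrow L^\infty$ then gives $\|f_\lambda\|_\infty\lesssim\lambda^{(s'-\alpha)/2}$. Hence $\|f^*-f_\lambda\|_\infty\lesssim1+\lambda^{(s'-\alpha)/2}\lesssim\lambda^{(s'-\alpha)/2}$ for $\lambda\le1$.

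This is essentially how the cited work controls the sup norm. It additionally truncates $f^*$ and handles the tail through an $L^q$-embedding, because it does not assume boundedness; here $B^s_{\infty,\infty}\subset L^\infty$ makes that unnecessary. With this in place, your bias-piece constants $L\lesssim\lambda^{-\alpha+s'/2}$ and $\sigma_0^2\lesssim\lambda^{s'-\alpha}$ hold for all admissible $(s',\alpha)$, and the rest of your proof goes through.
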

Thus, if we choose $\alpha\to \frac{d}{2s+d}$ and $s'\to \frac{2s}{2s+d}$, the stochastic error is bounded by $\log(2/\delta)n^{-\frac{s}{2s+d}}$.

\section{Failure of Properness}
Motivated by the fact that the misspecified kernel regression estimator satisfies the same source condition as the true function \cite{zhang2024optimality}, a property referred to as properness, we ask the following fundamental question: is the H\"older--Zygmund norm of the regression estimator still bounded under the H\"older source condition? To answer this question, we construct the following counterexample.
Consider the kernel
\begin{align*}
    k(x,y) = \sum_{i=1}^\infty i^{-\frac{2s + d}{d}} e_i(x) e_i(y).
\end{align*}
Denote its RKHS by $\HH$. Then $\HH$ and the Sobolev space $H^{s + \frac{d}{2}}([0,1]^d)$ are equivalent as Hilbert spaces.
\begin{theorem}
    Take the kernel $k$, let $X$ be uniform on $[0,1]^d$, let $f^*=0$, and let the noise be Gaussian. Then
    \begin{align*}
        \EE\norm{\hat f_\lambda}_{B^s_{\infty,\infty}}^2
        \asymp \log n.
    \end{align*}
\end{theorem}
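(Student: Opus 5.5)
With $f^*=0$ we have $y_i=\epsilon_i$, and $\hat f_\lambda=(T_X+\lambda)^{-1}g_Z$ is a centered Gaussian process conditional on $x_1,\dots,x_n$. I will pass to wavelet coefficients and use the characterization of $\|\cdot\|_{B^s_{\infty,\infty}}$ from the preliminaries. If $\hat f_\lambda=\sum_j \hat a_j e_j$, then
\[
\|\hat f_\lambda\|_{B^s_{\infty,\infty}}\asymp \max_{j\le 2^{Jd}}|\hat a_j|+\sup_{\ell\ge J}2^{\ell(s+d/2)}\max_{2^{\ell d}<j\le 2^{(\ell+1)d}}|\hat a_j| .
\]
Since $2^{\ell d}\asymp j$ on level $\ell$, this is comparable to $\sup_j j^{\frac{2s+d}{2d}}|\hat a_j|$. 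Write $\mu_j=j^{-\frac{2s+d}{d}}$. The quantity to control is $\sup_j \mu_j^{-1/2}|\hat a_j|$, the supremum of a Gaussian process indexed by $j$. The claim $\EE\sup^2\asymp\log n$ should come from roughly $N=n^{d/(2s+d)}$ coordinates each of variance of order one, plus a tail that contributes only $O(1)$.

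First I compute the conditional covariance. Let $\Phi$ be the $n\times\infty$ matrix with entries $e_j(x_i)$, and $D=\diag(\mu_j)$. Then $\hat a=D\Phi^\top(\Phi D\Phi^\top+n\lambda I)^{-1}\epsilon$. With $\lambda=1/n$, $n\lambda=1$, and
\[
\Var(\hat a_j\mid X)=\sigma^2\,\mu_j^2\,\phi_j^\top(K+I)^{-2}\phi_j ,
\]
where $K=\Phi D\Phi^\top$ and $\phi_j=(e_j(x_i))_i$. Heuristically, replace the empirical operator by its population version $T=\diag(\mu_j)$ in the basis $e_j$; $X$ is uniform, so $T$ is diagonal. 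Then $\Var(\hat a_j)\approx \frac{\sigma^2}{n}\frac{\mu_j^2\cdot\mu_j^{-1}\mu_j}{(\mu_j+\lambda)^2}$ in normalized form. After rescaling,
\[
\mu_j^{-1}\Var(\hat a_j)\approx \frac{\sigma^2\mu_j}{n(\mu_j+\lambda)^2}\asymp
\begin{cases} \dfrac{1}{n\mu_j}, & j\le N,\\[4pt] n\mu_j, & j>N. \end{cases}
\]
This is not of order one at all scales. I need to recheck the normalization of $\HH$ versus the $B^s$ weight: the weight is $\mu_j^{-1/2}$, while the RKHS weight is also $\mu_j^{-1/2}$.

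Recomputing more carefully, the $B^s$ weight is $j^{\frac{2s+d}{2d}}=\mu_j^{-1/2}$, which is exactly the $\HH$-weight. The statement is therefore that $\sup_j$ of the rescaled coordinates of $\hat f_\lambda$ in the $\HH$ orthonormal basis $\sqrt{\mu_j}e_j$ is of order $\sqrt{\log n}$. Write the coefficients as $b_j=\mu_j^{-1/2}\hat a_j$, so $b=\mu_j^{1/2}$-weighted $\ip{\hat f}{\sqrt{\mu_j}e_j}_\HH$. Then $\Var(b_j)\approx \frac{\sigma^2}{n}\frac{\mu_j\cdot\mu_j}{(\mu_j+\lambda)^2}\mu_j^{-1}$. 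For $j\lesssim N$ this is $\asymp\frac1{n\mu_j}$, which grows up to order one at $j\asymp N$. For $j\gtrsim N$ it is $\asymp n\mu_j$, which decays polynomially. So the variances are of order one precisely for the block $j\in[cN,CN]$, with polynomial decay on both sides. The tail $\sum_j P(|b_j|>t)$ then gives $\EE\sup_j b_j^2\lesssim\log N\asymp\log n$ by a union bound. The lower bound comes from roughly $N$ nearly independent Gaussians of variance of order one, whose maximum squared is $\gtrsim\log N$ by Sudakov minoration or explicit decorrelation.

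The rigorous steps are as follows.
\begin{itemize}
\item[(i)] Work on the high-probability event of the first lemma above (the operator-ratio bounds), whose complement has small enough probability. Its contribution is controlled crudely by the deterministic bound $\|\hat f\|_\HH^2\le\lambda^{-1}\frac1n\sum\epsilon_i^2$.
\item[(ii)] On this event, bound the conditional covariance. Using $(T_X+\lambda)^{-1}\preceq 3(T+\lambda)^{-1}$, the conditional covariance operator of $\hat f$, namely $\frac{\sigma^2}{n}(T_X+\lambda)^{-1}T_X(T_X+\lambda)^{-1}$, is dominated by a constant times $\frac{\sigma^2}{n}(T+\lambda)^{-1}$. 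This gives the variance upper bounds $\Var(b_j\mid X)\lesssim \frac{1}{n(\mu_j+\lambda)}$.
\item[(iii)] These bounds are $\lesssim1$ for all $j$, but they do not decay for $j>N$. The upper bound therefore needs a refinement: use the extra factor $T_X\approx T$ in the high-index regime. Alternatively, bound levels $\ell$ with $2^{\ell d}\gg N$ using $\|\hat f\|_{\HH}$-type control at a higher smoothness level. I expect this tail control to be the main obstacle, because $T_X$ does not dominate $T$ at high frequencies.
\end{itemize}

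For the lower bound I will restrict to the indices $j\in[N,2N]$. On the good event the conditional variances there are $\asymp1$, which comes from the matching lower bound $(T_X+\lambda)^{-1}\succeq\frac12(T+\lambda)^{-1}$. I then apply Sudakov's inequality to the conditional Gaussian vector $(b_j)_{j\in[N,2N]}$. This needs a lower bound on the pairwise distances $\EE(b_j-b_k)^2\gtrsim1$, which I will obtain from the same operator comparison applied to the vector $\sqrt{\mu_j}e_j-\sqrt{\mu_k}e_k$. The result is $\EE\sup b_j^2\gtrsim\log N\asymp\log n$, and the final answer follows by combining the two bounds.
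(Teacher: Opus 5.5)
There is a genuine gap in the upper bound. In (iii) you leave the high-frequency tail open, and neither suggested fix is carried out. The first one (use $T_X\approx T$ there) cannot work uniformly: $T_X$ has rank $n$, and at levels with $2^{\ell d}\gg n$ the quantities $\frac1n\sum_i e_j(x_i)^2$ are far from $1$ (most vanish, a few are of order $2^{\ell d}/n$). The missing idea is a crude but design-free envelope. Write $\psi_j=(e_j(x_i))_i$ (your $\phi_j$) and $A=K+I$. Then $A\succeq I$, equivalently $(T_X+\lambda)^{-1}T_X(T_X+\lambda)^{-1}\preceq\lambda^{-2}T_X$, gives
\[
  \Var(b_j\mid X)=\sigma^2\mu_j\psi_j^\top A^{-2}\psi_j\le\sigma^2\mu_j\norm{\psi_j}_2^2\le\sigma^2 n\mu_j\norm{e_j}_\infty^2\lesssim\sigma^2 n2^{-2s\ell},\qquad j\in\mathcal J_\ell .
\]
This bound only starts to decay at the level $L_n'$ with $2^{2sL_n'}\asymp n$, far beyond your $N$, but that is harmless. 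Up to that level there are only $\lesssim n^{d/(2s)}$ coordinates. Each has conditional variance $O(\sigma^2)$, either by your step (ii) or deterministically, since Sherman--Morrison gives $\mu_j\psi_j^\top A^{-2}\psi_j\le 1/4$ as in the paper. So the Gaussian union bound costs $\sigma^2\log(n^{d/(2s)})\asymp\sigma^2\log n$, while the higher levels contribute $\sigma^2 n\sum_{\ell>L_n'}2^{-2s\ell}(1+\ell)\lesssim\sigma^2\log n$. The upper bound then holds for every fixed design, so step (i) is not needed. As written, step (i) would in any case require $\PP(\text{bad event})\lesssim\log n/n$, because $\norm{\hat f_\lambda}_\HH^2\le\sum_i\epsilon_i^2$ is of order $n$.

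Your lower-bound argument is repairable, but its stated justification is wrong. The conditional covariance operator $\frac{\sigma^2}{n}(T_X+\lambda)^{-1}T_X(T_X+\lambda)^{-1}$ vanishes on the infinite-dimensional subspace $\ker T_X=\{f\in\HH: f(x_i)=0\ \forall i\}$, where $(T_X+\lambda)^{-1}=\lambda^{-1}$. So $(T_X+\lambda)^{-1}\succeq\frac12(T+\lambda)^{-1}$ yields no lower bound on it, and no operator inequality against $(T+\lambda)^{-1}$ can. What works is the \emph{other} comparison, read through $(T_X+\lambda)^{-1}=n(I-S_X^*A^{-1}S_X)$ with $S_X$ the sampling operator. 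The inequality $(T_X+\lambda)^{-1}\preceq 3(T+\lambda)^{-1}$ gives $\mu_j\psi_j^\top A^{-1}\psi_j\ge 1-\frac{3}{n(\mu_j+\lambda)}$. This is informative only for $\mu_j>2\lambda$, hence not on your window $[N,2N]$ where $\mu_j\le\lambda$, unless you sharpen the constant $3$ to $1+o(1)$. Instead take a window $\mu_j\in[4\lambda,C\lambda]$, which still contains $\asymp N$ indices. There, Cauchy--Schwarz $\psi^\top A^{-2}\psi\ge(\psi^\top A^{-1}\psi)^2/\norm{\psi}_2^2$ together with $\mu_j\norm{\psi_j}_2^2\le 2n(\mu_j+\lambda)$ gives $\Var(b_j\mid X)\gtrsim\sigma^2$. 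The same computation with $u=\sqrt{\mu_j}\psi_j-\sqrt{\mu_k}\psi_k$ gives the pairwise separation $\EE[(b_j-b_k)^2\mid X]\gtrsim\sigma^2$ that Sudakov needs. Repaired this way, your lower bound is shorter than the paper's, which uses Gram concentration on the cutoff block, a $U$-statistic bound on $\frac1m\sum_j\psi_j^\top K\psi_j$, the inequality $u^\top A^{-2}u\ge\norm{u}_2^6/(u^\top Au)^2$, and a packing argument from $\norm{\Sigma_{\mathcal G}}_{\op}\lesssim\sigma^2$. The price is relying on the imported operator-comparison lemma at $\lambda=1/n$.
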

\subsection{Proof}

We work on $[0,1]^d$ with $d\ge1$ and $s>0$. Let $\{e_j\}_{j\ge1}$ be a boundary-corrected compactly supported orthonormal wavelet basis of $L^2([0,1]^d)$ with regularity $M>s$.  The finite scaling block is included in the lowest level.  For a wavelet index $j$, let $\ell(j)$ be its resolution level and set
\[
  \mathcal J_\ell:=\{j:\ell(j)=\ell\},
  \qquad
  m_\ell:=|\mathcal J_\ell|\asymp 2^{\ell d}.
\]
We use the standard localization and finite-overlap bounds
\begin{equation*}
  \sum_{j\in\mathcal J_\ell} e_j(x)^2\lesssim 2^{\ell d},
  \qquad
  \norm{e_j}_\infty\lesssim 2^{\ell d/2},
  \qquad j\in\mathcal J_\ell,
\end{equation*}
and, for $j\in\mathcal J_r$ and $k\in\mathcal J_\ell$,
\begin{equation*}
  \#\{k\in\mathcal J_\ell:\supp e_k\cap\supp e_j\ne\varnothing\}
  \lesssim 2^{(\ell-r)_+d}.
\end{equation*}
Let
\[
  K=(k(X_a,X_b))_{a,b=1}^n,
  \qquad
  k_X(x)=(k(x,X_1),\ldots,k(x,X_n))^\top,
\]
and, in this subsection only, write the unnormalized sampling vector as
\begin{equation*}
  \psi_j:=(e_j(X_1),\ldots,e_j(X_n))^\top.
\end{equation*}
This notation is deliberately different from the normalized feature $\phi_j=\rho_j e_j$.  For $\lambda=1/n$ set
\begin{equation*}
  A:=K+n\lambda I_n=K+I_n.
\end{equation*}
The stochastic noise term is
\begin{equation*}
  S_n=(T_X+\lambda I)^{-1}\frac1n\sum_{i=1}^n\epsilon_i k_{X_i},
  \qquad
  \epsilon_i\stackrel{\mathrm{iid}}{\sim}N(0,\sigma^2).
\end{equation*}

\begin{theorem}[Logarithmic size of the KRR noise term]\label{thm:proper-noise-log}
There are constants $0<c<C<\infty$ such that, for all sufficiently large $n$,
\begin{equation*}
  c\sigma^2\log n
  \le
  \EE\norm{S_n}_{B^s_{\infty,\infty}}^2
  \le
  C\sigma^2\log n.
\end{equation*}
More precisely, the lower bound holds conditionally on a design event $\mathcal E_n$ with $\PP_X(\mathcal E_n)\to1$:
\begin{equation*}
  \EE_\epsilon\left[
    \norm{S_n}_{B^s_{\infty,\infty}}^2\mid X
  \right]
  \ge c\sigma^2\log n
  \qquad \text{on }\mathcal E_n,
\end{equation*}
and the upper bound is deterministic in the design:
\begin{equation*}
  \EE_\epsilon\left[
    \norm{S_n}_{B^s_{\infty,\infty}}^2\mid X
  \right]
  \le C\sigma^2\log n
  \qquad \text{for every fixed }X.
\end{equation*}
\end{theorem}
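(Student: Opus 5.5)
Since $S_n$ is linear in the Gaussian vector $\epsilon$, I will work with its wavelet coefficients. The kernel is $k=\sum_j \mu_j e_j\otimes e_j$ with $\mu_j=j^{-(2s+d)/d}$. Writing $S_n=k_X(\cdot)^\top A^{-1}\epsilon$, the $j$-th coefficient is
\[
  b_j:=\ip{S_n}{e_j}_{L^2}=\mu_j\,\psi_j^\top A^{-1}\epsilon .
\]
Conditionally on $X$, $(b_j)$ is therefore a centered Gaussian process with variances $v_j=\sigma^2\mu_j^2\,\psi_j^\top A^{-2}\psi_j$. By the wavelet characterization in the Preliminaries, $\norm{S_n}_{B^s_{\infty,\infty}}\asymp\sup_j w_j\abs{b_j}$ with weights $w_j\asymp 2^{\ell(j)(s+d/2)}\asymp j^{(2s+d)/(2d)}=\mu_j^{-1/2}$. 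The problem thus reduces to computing $\EE\sup_j Z_j^2$ for the Gaussian process $Z_j:=\mu_j^{-1/2}b_j=\mu_j^{1/2}\psi_j^\top A^{-1}\epsilon$. Its variance is
\[
  \tau_j^2=\sigma^2\mu_j\,\psi_j^\top A^{-2}\psi_j .
\]
The plan is to show three things: $\tau_j^2\lesssim\sigma^2$ uniformly in $j$ and $X$; $\tau_j^2\ll\sigma^2$ quickly once $\ell$ is past the effective level $L$ with $2^{Ld}\asymp n$; and on a good design event many indices have $\tau_j^2\gtrsim\sigma^2$ and are nearly independent.

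\textbf{Upper bound.} First, $Z_j$ is a linear functional of $\epsilon$. Let $\Phi$ be the feature matrix with columns $\mu_j^{1/2}\psi_j$, so that $K=\Phi\Phi^\top$. Then $\sum_j\tau_j^2=\sigma^2\operatorname{tr}(\Phi^\top A^{-2}\Phi)=\sigma^2\operatorname{tr}(K(K+I)^{-2})\le\sigma^2 n$, and each single variance satisfies
\[
  \tau_j^2=\sigma^2\mu_j\psi_j^\top A^{-2}\psi_j\le\sigma^2\mu_j\psi_j^\top(\mu_j\psi_j\psi_j^\top+I)^{-1}\psi_j\,\cdot\,\norm{A^{-1}}\le\sigma^2 ,
\]
using $A\succeq\mu_j\psi_j\psi_j^\top+I$ and $A^{-2}\preceq A^{-1}$. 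Because these bounds are deterministic, they hold for every fixed $X$.

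For the supremum I split the indices into low levels $\ell\le L+c_0\log_2\log n$ and high levels. The low levels contain at most $\mathrm{poly}(n)$ indices, and the Gaussian maximal inequality gives $\EE\max Z_j^2\lesssim\sigma^2\log n$ there. For the high levels I use the cruder bound
\[
  \tau_j^2\le\sigma^2\mu_j\norm{\psi_j}^2\le\sigma^2\mu_j\, n\, 2^{\ell d}\lesssim\sigma^2 n2^{-\ell(2s+d)}2^{\ell d}=\sigma^2 n2^{-2s\ell} .
\]
This uses only $\norm{e_j}_\infty\lesssim2^{\ell d/2}$ and so is valid for every design. The variance then decays geometrically in $\ell$ beyond $L'$, where $2^{2sL'}=n$. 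Summing $\PP(\max_{\mathcal J_\ell}\abs{Z_j}>t)$ over levels, with $m_\ell\asymp 2^{\ell d}$ indices per level, gives a contribution of order $O(\sigma^2\log n)$, because $\log m_\ell\cdot n2^{-2s\ell}$ is summable past $L'$. If $L'>L$, the levels between $L$ and $L'$ number only $O(\log n)$, so that range is still polynomial in $n$ and the first argument covers it.

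\textbf{Lower bound.} For the lower bound I restrict to a level $\ell_0$ with $2^{\ell_0 d}\asymp n/\log n$ or slightly smaller. The design event $\mathcal E_n$ requires that every dyadic cube at scale $2^{-\ell_0}$ contain $\asymp n2^{-\ell_0 d}$ points; a Chernoff bound plus a union bound gives $\PP_X(\mathcal E_n)\to1$. On $\mathcal E_n$ I want two facts. The first is $\tau_j^2\gtrsim\sigma^2$ for a subset of $\mathcal J_{\ell_0}$ of size $n^{c}$. The second is that the correlations between these $Z_j$ are small, for instance by taking indices with disjoint, well-separated supports. With both in hand, Sudakov minoration, or Slepian comparison against independent Gaussians, yields $\EE\max Z_j^2\gtrsim\sigma^2\log n^{c}$.

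To estimate $\tau_j^2$ from below I use
\[
  \psi_j^\top A^{-2}\psi_j\ge(\psi_j^\top A^{-1}\psi_j)^2/\norm{\psi_j}^2 ,
\]
which holds by Cauchy--Schwarz. I then lower-bound $\mu_j\psi_j^\top A^{-1}\psi_j$ by showing that $\psi_j$ is not too aligned with the top eigenspace of $K$. This should follow from the localization and finite-overlap bounds: the coarse-level part $K_{<\ell_0}$ has rank $\lesssim 2^{\ell_0d}\ll n$, and the fine-level part has operator norm $\lesssim\sum_{\ell\ge\ell_0}\mu\,n2^{-\ell d}\cdot2^{\ell d}\mu$-type sums of order $O(1)$ at this scale.

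\textbf{Main obstacle.} The hardest step is this lower bound on $\tau_j^2$ together with near-decorrelation, because $A^{-1}$ mixes all levels. My first attempt would compare $A$ with a block approximation: project $\psi_j$ onto the orthogonal complement of the span of $\{\psi_k:\ell(k)<\ell_0\}$. On $\mathcal E_n$, the empirical Gram matrix of the coarse wavelets is close to the identity times $n$, so this projection removes only a constant fraction of $\norm{\psi_j}^2\asymp n$. Off the span, $A$ acts with norm $O(1)$ as computed above, which gives $\mu_j\psi_j^\top A^{-1}\psi_j\gtrsim\mu_j n\asymp 1$ at $2^{\ell_0 d}\asymp n$. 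The correlations are then controlled the same way, since $A^{-1}$ is approximately local, and the bound holds conditionally on $X\in\mathcal E_n$.
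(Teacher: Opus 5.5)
Your upper bound is essentially the paper's argument: a uniform bound $\tau_j^2\lesssim\sigma^2$ from a rank-one comparison, the crude envelope $\sigma^2n2^{-2s\ell}$, a split at $2^{2sL'}\asymp n$, and the Gaussian maximal inequality. It is fine.

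The lower bound, however, is placed at the wrong level. You take the effective level to be $2^{Ld}\asymp n$, where the number of features is about $n$. With $\lambda=1/n$ the relevant scale is instead where $\mu_j\asymp\lambda$, i.e.\ $2^{\ell_n(2s+d)}\asymp n$, which is polynomially coarser. At your level $2^{\ell_0d}\asymp n/\log n$ one has $\mu_j\asymp(n/\log n)^{-(2s+d)/d}$. On your own design event $\norm{\psi_j}_2^2\lesssim n$, so
\[
  \tau_j^2\le\sigma^2\mu_j\norm{\psi_j}_2^2\lesssim\sigma^2(\log n)^{1+2s/d}\,n^{-2s/d}\longrightarrow0 .
\]
Hence $\EE_\epsilon\max_{j\in\mathcal J_{\ell_0}}Z_j^2\lesssim\max_j\tau_j^2\cdot\log n\to0$, and no Sudakov or Slepian argument at that level can produce $\sigma^2\log n$. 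The slip is visible in your last paragraph: $\mu_jn\asymp1$ holds at $2^{\ell(2s+d)}\asymp n$, not at $2^{\ell d}\asymp n$.

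The repair is to work at $\ell_n$, where $m_n\asymp n^{d/(2s+d)}$ still gives $\log m_n\asymp\log n$. Your projection idea can then be made to work. The coarse span has dimension $O(n^{d/(2s+d)})=o(n)$, joint Gram concentration holds, and the part of $K$ from levels $\ge\ell_n$ has $O(1)$ operator norm on a regular design. The paper's route is shorter. It uses $u^\top A^{-2}u\ge\norm{u}_2^6/(u^\top Au)^2$, together with $\norm{\psi_j}_2^2\asymp n$ from Gram concentration and $\frac1m\sum_j\psi_j^\top K\psi_j=O_{\PP}(n)$ plus Markov counting. This gives a constant fraction $\mathcal G\subset\mathcal J_{\ell_n}$ with $\tau_j^2\gtrsim\sigma^2$.

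Your decorrelation step is also unsupported. Disjoint supports of $e_j,e_k$ do not by themselves make $\sqrt{\mu_j\mu_k}\,\psi_j^\top A^{-2}\psi_k$ small. The claim that ``$A^{-1}$ is approximately local'' is not established, and the coarse part of $K$ is low-rank and nonlocal.

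No locality is needed. Since $A^{-2}\preceq I_n$, the covariance of $(Z_j)_{j\in\mathcal J_{\ell_n}}$ is dominated by $\sigma^2D^{1/2}\Psi^\top\Psi D^{1/2}$, whose operator norm is $\lesssim\sigma^2$ on the Gram event. Together with $\Sigma_{jj}\gtrsim\sigma^2$ on $\mathcal G$, a small canonical distance $d_X(j,k)$ forces $\Sigma_{jk}\gtrsim\sigma^2$. Meanwhile $\sum_k\Sigma_{jk}^2\le\norm{\Sigma}_{\op}^2\lesssim\sigma^4$, so each $c\sigma$-ball contains $O(1)$ indices. Therefore $\mathcal G$ has a $c\sigma$-packing of size $\gtrsim|\mathcal G|$, and Sudakov gives $\EE_\epsilon\max_{j\in\mathcal G}Z_j\gtrsim\sigma\sqrt{\log n}$.
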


\begin{lemma}[Dual representation and wavelet coefficients]\label{lem:proper-noise-dual}
The noise solution satisfies
\begin{equation*}
  S_n(x)=k_X(x)^\top A^{-1}\epsilon.
\end{equation*}
Moreover,
\begin{equation*}
  \ip{S_n}{e_j}_{L^2}=\mu_j\psi_j^\top A^{-1}\epsilon.
\end{equation*}
If
\begin{equation*}
  G_j:=\sqrt{\mu_j}\,\psi_j^\top A^{-1}\epsilon,
\end{equation*}
then for $j\in\mathcal J_\ell$,
\begin{equation*}
  2^{\ell(s+d/2)}\ip{S_n}{e_j}_{L^2}\asymp G_j.
\end{equation*}
\end{lemma}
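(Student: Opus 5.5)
We prove the three claims in turn. Throughout, $\mu_j:=j^{-\frac{2s+d}{d}}$ denotes the eigenvalues of the kernel $k$, so that
\[
k(x,y)=\sum_j \mu_j e_j(x)e_j(y).
\]

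\textbf{Step 1: dual representation.} By the representer theorem, $S_n\in\Span\{k_{X_i}\}$. Write $S_n=\sum_i c_i k_{X_i}$ and apply $(T_X+\lambda I)$ to both sides of the defining equation
\[
(T_X+\lambda I)S_n=\frac1n\sum_i\epsilon_i k_{X_i}.
\]
Since $T_X k_{X_b}=\frac1n\sum_a k(X_a,X_b)k_{X_a}$, matching coefficients of $k_{X_a}$ gives
\[
\frac1n\bigl((K+n\lambda I_n)c\bigr)_a=\frac1n\epsilon_a,
\]
that is, $Ac=\epsilon$. Hence $c=A^{-1}\epsilon$, which is valid because $A=K+I_n$ is positive definite. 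Evaluating at $x$ gives $S_n(x)=\sum_i c_i k(x,X_i)=k_X(x)^\top A^{-1}\epsilon$. If the $k_{X_i}$ are linearly dependent, uniqueness of the solution still yields this formula, since $(T_X+\lambda I)$ is invertible on $\HH$.

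\textbf{Step 2: wavelet coefficients.} Take the $L^2$ inner product of $k(\cdot,X_i)=\sum_j\mu_j e_j(X_i)e_j$ with $e_j$. The series converges in $\HH$, hence in $L^2$, so by orthonormality
\[
\ip{k(\cdot,X_i)}{e_j}_{L^2}=\mu_j e_j(X_i).
\]
Therefore $\ip{k_X}{e_j}_{L^2}=\mu_j\psi_j$ componentwise. By linearity,
\[
\ip{S_n}{e_j}_{L^2}=\mu_j\psi_j^\top A^{-1}\epsilon,
\]
and by the definition of $G_j$ this equals $\sqrt{\mu_j}\,G_j$.

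\textbf{Step 3: scaling.} For $j\in\mathcal J_\ell$ we need $2^{\ell(s+d/2)}\sqrt{\mu_j}\asymp1$. By the enumeration of the basis by increasing resolution, $j\asymp 2^{\ell d}$ for $j\in\mathcal J_\ell$; the lowest level contains the finite scaling block of bounded size. Hence
\[
\mu_j=j^{-\frac{2s+d}{d}}\asymp 2^{-\ell(2s+d)},
\]
so that $\sqrt{\mu_j}\asymp2^{-\ell(s+d/2)}$. The constants depend only on $d$ and $J$ and not on $j$ within a level, because $2^{\ell d}<j\le2^{(\ell+1)d}$. Multiplying through gives $2^{\ell(s+d/2)}\ip{S_n}{e_j}_{L^2}\asymp G_j$.

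The main care point is Step 1: we must handle the operator equation in $\HH$ rather than naively inverting, and confirm that the normalization $n\lambda=1$ produces exactly $A=K+I_n$. The rest is bookkeeping.
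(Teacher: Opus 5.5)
Your proof is correct and follows the paper's argument step by step: you reduce to the span of the $k_{X_i}$ and solve $(K+n\lambda I_n)c=\epsilon$, read off the coefficients from the Mercer expansion $\ip{k(\cdot,X_i)}{e_j}_{L^2}=\mu_j e_j(X_i)$, and use $j\asymp 2^{\ell d}$ on $\mathcal J_\ell$ to get $2^{\ell(s+d/2)}\sqrt{\mu_j}\asymp 1$. Your remark on linear dependence is a small improvement over coefficient matching: verify directly that $\sum_i (A^{-1}\epsilon)_i k_{X_i}$ solves the defining equation, then invoke invertibility of $T_X+\lambda I$.
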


\begin{proof}
The right-hand side in the definition of $S_n$ lies in $\Span\{k_{X_i}:1\le i\le n\}$, and the orthogonal complement in the RKHS is killed by $T_X$; hence the solution lies in this finite span.  Write $S_n=\sum_i\alpha_i k_{X_i}$.  Then
\[
  \frac1nK\alpha+\lambda\alpha=\frac1n\epsilon,
  \qquad
  (K+n\lambda I_n)\alpha=\epsilon.
\]
For $\lambda=1/n$, $\alpha=A^{-1}\epsilon$, which proves the dual representation.  By the Mercer expansion,
\[
  \ip{k(\cdot,X_i)}{e_j}_{L^2}=\mu_j e_j(X_i),
\]
so the wavelet coefficient identity follows.  Finally,
\[
  2^{\ell(s+d/2)}\ip{S_n}{e_j}_{L^2}
  =\left(2^{\ell(s+d/2)}\sqrt{\mu_j}\right)G_j,
\]
and the prefactor is bounded above and below by positive constants on $\mathcal J_\ell$.
\end{proof}

Choose the cutoff level $\ell_n$ so that
\begin{equation*}
  2^{-\ell_n(2s+d)}\asymp n^{-1},
\end{equation*}
with integer rounding only changing constants.  Then
\begin{equation*}
  m_n:=|\mathcal J_{\ell_n}|
  \asymp 2^{\ell_nd}
  \asymp n^{d/(2s+d)}=o(n).
\end{equation*}

\begin{lemma}[Empirical Gram concentration on the cutoff block]\label{lem:proper-gram-cutoff}
Let $\mathcal J=\mathcal J_{\ell_n}$ and $m=m_n$.  Define
\[
  \Psi=(e_j(X_i))_{1\le i\le n,\,j\in\mathcal J}\in\RR^{n\times m}.
\]
Then
\begin{equation*}
  \norm{\frac1n\Psi^\top\Psi-I_m}_{\op}\le\frac12
\end{equation*}
with probability tending to one.
\end{lemma}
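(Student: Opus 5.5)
We will prove the statement with a matrix Bernstein inequality for sums of independent bounded self-adjoint random matrices. The key ingredients are the wavelet localization bounds stated above and the fact that $X$ is uniform on $[0,1]^d$.

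Write
\[
\frac1n\Psi^\top\Psi=\frac1n\sum_{i=1}^n Z_i,\qquad Z_i:=v(X_i)v(X_i)^\top,\qquad v(x):=(e_j(x))_{j\in\mathcal J}\in\RR^m .
\]
Since $X$ is uniform and $\{e_j\}$ is orthonormal in $L^2([0,1]^d)$, we have
\[
\EE Z_i=\left(\int e_je_k\right)_{j,k\in\mathcal J}=I_m .
\]
So we need to control $\norm{\frac1n\sum_i(Z_i-I_m)}_{\op}$.

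\textbf{Uniform bound.} The localization bound gives
\[
\norm{Z_i}_{\op}=\abs{v(X_i)}^2=\sum_{j\in\mathcal J}e_j(X_i)^2\le L:=C_0 2^{\ell_nd}\asymp m_n
\]
almost surely. Hence
\[
\norm{Z_i-I_m}_{\op}\le L+1\lesssim m_n .
\]

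\textbf{Variance proxy.} We have
\[
\EE (Z_i-I)^2=\EE\bigl[\abs{v}^2vv^\top\bigr]-I\preceq L\,\EE[vv^\top]=L\,I_m ,
\]
so the matrix variance of the sum is at most $nL$.

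\textbf{Matrix Bernstein.} The matrix Bernstein inequality then gives, for $t=1/2$,
\[
\PP\left(\norm{\frac1n\sum_i (Z_i-I)}_{\op}\ge \tfrac12\right)\le 2m\exp\left(-\frac{n^2/8}{nL+(L+1)n/6}\right)\le 2m_n\exp\left(-c\,\frac{n}{m_n}\right).
\]

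\textbf{Conclusion.} Since $m_n\asymp n^{d/(2s+d)}$, the exponent is
\[
c\,n/m_n\asymp n^{2s/(2s+d)},
\]
a positive power of $n$. It dominates $\log(2m_n)\lesssim\log n$, so the probability tends to zero, which proves the lemma. In fact the failure probability is $\exp(-c'n^{2s/(2s+d)})$ for large $n$.

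The only point needing care is the variance computation. One must use the pointwise bound $\sum_j e_j(x)^2\lesssim 2^{\ell d}$ rather than $m\max_j\norm{e_j}_\infty^2$. Both happen to give order $m_n$ here, because $\norm{e_j}_\infty^2\lesssim 2^{\ell d}$ while the finite-overlap property means only $O(1)$ wavelets at level $\ell_n$ are nonzero at any point. The pointwise bound is what makes this clean. This is also where the condition $m_n=o(n/\log n)$ is essential, and it holds with room to spare. If the boundary-corrected wavelets are included in the block, the same bounds hold by the stated localization estimates, so no separate treatment is needed.
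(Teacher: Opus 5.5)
Your proof is correct and follows the paper's argument: matrix Bernstein applied to $v(X_i)v(X_i)^\top-I_m$, using the pointwise localization bound $\sum_{j\in\mathcal J}e_j(x)^2\lesssim m_n$, which gives the tail $2m_n\exp(-cn/m_n)$. Your explicit check that $n/m_n\asymp n^{2s/(2s+d)}$ dominates $\log m_n$ is the condition actually needed (the paper's $m_n=o(n)$ alone would not force $2m_ne^{-cn/m_n}\to0$). Your side remark that $m\max_j\norm{e_j}_\infty^2$ is also of order $m_n$ is wrong, since it is of order $m_n^2$, but this is harmless because your argument uses the pointwise bound.
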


\begin{proof}
For a fixed level $\ell$ and $\mathcal J=\mathcal J_\ell$, put $v(x)=(e_j(x))_{j\in\mathcal J}\in\RR^m$.  Then $\EE[v(X)v(X)^\top]=I_m$, and the localization bound gives $\norm{v(x)}_2^2\lesssim m$.  Matrix Bernstein applied to
\[
  Y_i:=v(X_i)v(X_i)^\top-I_m
\]
gives, for $0<t\le1$,
\begin{equation*}
  \PP\left(
  \norm{\frac1n\Psi^\top\Psi-I_m}_{\op}>t
  \right)
  \le 2m\exp\left(-c\frac{nt^2}{m}\right).
\end{equation*}
At the cutoff, $m=m_n=o(n)$, so taking $t=1/2$ gives the claimed Gram bound with probability tending to one.
\end{proof}

\begin{lemma}[Block-average bound]\label{lem:proper-block-average}
At the cutoff level, let $\mathcal J=\mathcal J_{\ell_n}$ and $m=|\mathcal J|$.  Define
\[
  P(x,y):=\sum_{j\in\mathcal J}e_j(x)e_j(y),
  \qquad
  H(x,y):=\frac1m k(x,y)P(x,y).
\]
Then
\begin{equation*}
  \frac1m\sum_{j\in\mathcal J}\psi_j^\top K\psi_j
  =\sum_{a,b=1}^nH(X_a,X_b)=:U,
\end{equation*}
and
\begin{equation*}
  \EE U\lesssim n,
  \qquad
  \Var(U)\lesssim n+\frac{n^2}{m}.
\end{equation*}
Consequently $U=O_{\PP}(n)$.
\end{lemma}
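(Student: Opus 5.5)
The identity is pure algebra. By definition $\psi_j^\top K\psi_j=\sum_{a,b}e_j(X_a)k(X_a,X_b)e_j(X_b)$. Averaging over $j\in\mathcal J$ and exchanging the finite sums gives
\[
\frac1m\sum_{j\in\mathcal J}\psi_j^\top K\psi_j=\sum_{a,b=1}^n k(X_a,X_b)\,\frac1m\sum_{j\in\mathcal J}e_j(X_a)e_j(X_b)=\sum_{a,b}H(X_a,X_b).
\]
For the moments, I split $U$ into its diagonal part $D=\sum_a H(X_a,X_a)$ and its off-diagonal part $O=\sum_{a\ne b}H(X_a,X_b)$.

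\textbf{Diagonal part.} Two pointwise bounds drive everything.
\begin{itemize}
\item Since $\mu_i\asymp 2^{-\ell(2s+d)}$ on level $\ell$, the localization bound gives $k(x,x)=\sum_i\mu_ie_i(x)^2\lesssim\sum_\ell 2^{-\ell(2s+d)}2^{\ell d}\lesssim 1$. By Cauchy--Schwarz, $|k(x,y)|\lesssim1$ everywhere.
\item Likewise $P(x,x)\lesssim 2^{\ell_n d}\asymp m$, and hence $|P(x,y)|\lesssim m$.
\end{itemize}
Together these give $\|H\|_\infty\lesssim 1$. So $D\lesssim n$ deterministically, with $\Var(D)\lesssim n$.

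\textbf{Mean of the off-diagonal part.} Since $X$ is uniform and $\{e_i\}$ is orthonormal,
\[
\EE H(X,X')=\frac1m\sum_{j\in\mathcal J}\sum_i\mu_i\ip{e_i}{e_j}^2=\frac1m\sum_{j\in\mathcal J}\mu_j\asymp 2^{-\ell_n(2s+d)}\asymp n^{-1}.
\]
Therefore $\EE O\le n^2\cdot O(n^{-1})\lesssim n$, and so $\EE U\lesssim n$.

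\textbf{Variance of the off-diagonal part.} I use the Hoeffding decomposition of the degree-two U-statistic:
\[
\Var(O)\lesssim n^3\Var\big(h_1(X)\big)+n^2\,\EE H(X,X')^2,
\qquad h_1(x)=\EE_{X'}H(x,X').
\]
For the linear term,
\[
h_1(x)=\frac1m\sum_{j\in\mathcal J}\mu_je_j(x)^2\lesssim \frac{2^{-\ell_n(2s+d)}}{m}\,2^{\ell_nd}\asymp n^{-1}.
\]
Hence $n^3\Var(h_1)\lesssim n^3\cdot n^{-2}=n$.

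For the quadratic term I use $|k|\lesssim1$, which gives $\EE H^2\le m^{-2}\,\EE P(X,X')^2$. Orthonormality gives $\EE P(X,X')^2=\sum_{j,j'}\ip{e_j}{e_{j'}}^2=m$, so $\EE H^2\lesssim 1/m$. This term therefore contributes $n^2/m$.

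Combining the diagonal and off-diagonal pieces gives $\Var(U)\lesssim n+n^2/m$.

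\textbf{Conclusion.} By Chebyshev, $U\le \EE U+O_\PP\big(\sqrt{n+n^2/m}\big)$, and $\sqrt{n^2/m}=n/\sqrt m=o(n)$ because $m=m_n\to\infty$. Hence $U=O_\PP(n)$.

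The only delicate step is the quadratic term. A naive bound $|H|\lesssim1$ would give $n^2$, which is too large. The needed gain of $1/m$ comes from combining $|k|\lesssim1$ with the $L^2$ orthogonality of $P$ rather than its sup bound.
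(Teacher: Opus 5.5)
Your proposal is correct and follows essentially the same route as the paper. It uses the same expansion for the identity, the sup bounds $|k|\lesssim 1$ and $|P|\lesssim m$ giving $\|H\|_\infty\lesssim 1$, and the diagonal/off-diagonal split. It then applies Hoeffding's decomposition with the linear part of size $n^{-1}$ and the degenerate part controlled by $\EE H(X,X')^2\lesssim m^{-1}$ via orthonormality of $P$, and finishes with Chebyshev; the only cosmetic difference is that you compute $\EE_{X'}H(x,X')=\frac1m\sum_{j\in\mathcal J}\mu_j e_j(x)^2$ exactly, whereas the paper bounds it using $\mu_{\max}$.
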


\begin{proof}
The identity follows by expanding the quadratic forms.  The projection kernel satisfies, by Cauchy--Schwarz and the localization bound,
\[
  |P(x,y)|\lesssim m,
  \qquad
  |H(x,y)|\lesssim1.
\]
Let $X,Y$ be independent uniform variables.  Since $e_j$ is an eigenfunction of the integral operator,
\begin{equation*}
  \EE H(X,Y)=\frac1m\sum_{j\in\mathcal J}\mu_j\lesssim n^{-1},
\end{equation*}
and, uniformly in $x$,
\begin{equation*}
  \abs{\EE_YH(x,Y)}
  \le \frac{\mu_{\max}}m\sum_{j\in\mathcal J}e_j(x)^2
  \lesssim n^{-1}.
\end{equation*}
Moreover, orthonormality gives
\begin{equation*}
  \EE H(X,Y)^2
  \le \frac{\sup_{x,y}|k(x,y)|^2}{m^2}
      \iint P(x,y)^2\,dx\,dy
  \lesssim m^{-1}.
\end{equation*}
Decompose $U=D+V$ with $D=\sum_aH(X_a,X_a)$ and $V=\sum_{a\ne b}H(X_a,X_b)$.  The diagonal part has expectation and variance $O(n)$.  For the off-diagonal part, write $V=2W$ where $W=\sum_{a<b}H(X_a,X_b)$, set
\[
  \theta:=\EE H(X,Y),
  \qquad
  g(x):=\EE_YH(x,Y)-\theta,
  \qquad
  h_0(x,y):=H(x,y)-\theta-g(x)-g(y).
\]
Then $|g(x)|\lesssim n^{-1}$ and $\EE h_0(X,Y)^2\lesssim m^{-1}$.  Hoeffding's decomposition yields
\[
  W=\binom n2\theta+(n-1)\sum_{a=1}^n g(X_a)
      +\sum_{a<b}h_0(X_a,X_b),
\]
and the linear and degenerate components are orthogonal in $L^2$, so
\[
  \Var(W)\lesssim n^3n^{-2}+n^2m^{-1}=n+\frac{n^2}{m}.
\]
Together with the preceding estimate on $\EE H(X,Y)$, this proves the stated moment bounds; Chebyshev's inequality gives $U=O_{\PP}(n)$ because $m\to\infty$.
\end{proof}

\begin{lemma}[Diagonal variance lower bound]\label{lem:proper-diag-lower}
At the cutoff level, with probability tending to one there is a subset $\mathcal G\subset\mathcal J_{\ell_n}$ such that
\[
  |\mathcal G|\ge c|\mathcal J_{\ell_n}|
\]
and
\begin{equation*}
  \mu_j\psi_j^\top A^{-2}\psi_j\gtrsim1,
  \qquad j\in\mathcal G.
\end{equation*}
Consequently
\begin{equation*}
  \Var_\epsilon(G_j\mid X)
  =\sigma^2\mu_j\psi_j^\top A^{-2}\psi_j
  \gtrsim\sigma^2,
  \qquad j\in\mathcal G.
\end{equation*}
\end{lemma}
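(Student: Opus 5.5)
We want a lower bound on $\mu_j\psi_j^\top A^{-2}\psi_j$ for a constant fraction of $j$ at the cutoff level, where $A=K+I_n$. The plan combines a Cauchy--Schwarz lower bound with an averaging argument built on Lemma~\ref{lem:proper-gram-cutoff} and Lemma~\ref{lem:proper-block-average}.

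\emph{Cauchy--Schwarz step.} For any vector $v$,
\[
  (v^\top A^{-1}v)^2=\bigl(\ip{A^{-1}v}{v}\bigr)^2\le \norm{A^{-1}v}_2^2\norm{v}_2^2,
\]
so with $v=\psi_j$ we get
\[
  \psi_j^\top A^{-2}\psi_j\ge \frac{(\psi_j^\top A^{-1}\psi_j)^2}{\norm{\psi_j}_2^2}.
\]
It therefore suffices to show, for $j$ in a large subset, that $\norm{\psi_j}_2^2\lesssim n$ and $\psi_j^\top A^{-1}\psi_j\gtrsim n$. Combined with $\mu_j\asymp n^{-1}$ at the cutoff (since $\mu_j\asymp 2^{-\ell_n(2s+d)}\asymp n^{-1}$), this gives
\[
  \mu_j\psi_j^\top A^{-2}\psi_j\gtrsim n^{-1}\cdot n^2/n=1.
\]

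\emph{Upper bound on $\norm{\psi_j}_2^2$.} The quantity $\norm{\psi_j}_2^2/n$ is the $j$-th diagonal entry of $\frac1n\Psi^\top\Psi$. On the event of Lemma~\ref{lem:proper-gram-cutoff}, every such entry lies in $[1/2,3/2]$, so $\norm{\psi_j}_2^2\asymp n$ for all $j\in\mathcal J_{\ell_n}$ simultaneously.

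\emph{Lower bound on $\psi_j^\top A^{-1}\psi_j$.} Since $K$ is positive semidefinite, the operator inequality $A^{-1}\succeq I-K$ is not usable directly. Instead I would use convexity of $t\mapsto 1/(1+t)$ via Jensen's inequality on the spectral measure of $K$ associated with the unit vector $u=\psi_j/\norm{\psi_j}$:
\[
  u^\top (I+K)^{-1}u\ge \frac{1}{1+u^\top K u}.
\]
This yields
\[
  \psi_j^\top A^{-1}\psi_j\ge \frac{\norm{\psi_j}^2}{1+\psi_j^\top K\psi_j/\norm{\psi_j}^2}.
\]
It then suffices that $\psi_j^\top K\psi_j\lesssim n$ for many $j$. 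Lemma~\ref{lem:proper-block-average} gives
\[
  \frac1m\sum_{j\in\mathcal J}\psi_j^\top K\psi_j=U=O_\PP(n).
\]
On an event where $U\le Bn$, Markov's inequality over the uniform choice of $j\in\mathcal J$ shows that at least half the indices satisfy $\psi_j^\top K\psi_j\le 2Bn$. Call this set $\mathcal G$. For $j\in\mathcal G$, using $\norm{\psi_j}_2^2\ge n/2$, we get $\psi_j^\top K\psi_j/\norm{\psi_j}^2\le 4B$, hence $\psi_j^\top A^{-1}\psi_j\ge \norm{\psi_j}^2/(1+4B)\gtrsim n$.

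Plugging into the Cauchy--Schwarz bound gives $\mu_j\psi_j^\top A^{-2}\psi_j\gtrsim 1$ on $\mathcal G$, with $|\mathcal G|\ge m/2$. The variance identity follows because $G_j=\sqrt{\mu_j}\,\psi_j^\top A^{-1}\epsilon$ is linear in Gaussian $\epsilon$ with covariance $\sigma^2 I$. Hence $\Var_\epsilon(G_j\mid X)=\sigma^2\mu_j\psi_j^\top A^{-1}A^{-1}\psi_j$, using that $A$ is symmetric.

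The main obstacle is ensuring the probability statements combine. Since $U=O_\PP(n)$, I would fix $\eta>0$ and pick $B$ with $\PP(U>Bn)<\eta$, which makes the constant $c$ depend on $\eta$. To get probability tending to one with fixed constants, I would instead use the moment bounds of Lemma~\ref{lem:proper-block-average} directly. Chebyshev with $\Var(U)\lesssim n+n^2/m=o(n^2)$ gives $U\le 2\EE U+n\lesssim n$ with probability tending to one, so $B$ can be a fixed constant. Intersecting with the event of Lemma~\ref{lem:proper-gram-cutoff} completes the argument.
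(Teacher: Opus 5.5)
Your proposal is correct and follows the paper's argument. Composing your Cauchy--Schwarz step with your Jensen step for $t\mapsto 1/(1+t)$ gives exactly the paper's inequality $\psi_j^\top A^{-2}\psi_j\ge\norm{\psi_j}_2^6/(\psi_j^\top A\psi_j)^2$, and the rest (Gram event, Markov counting on $\psi_j^\top K\psi_j$ instead of $\psi_j^\top A\psi_j$, which is immaterial, and Chebyshev with $\Var(U)=o(n^2)$ for a fixed constant) matches the paper while making explicit what it leaves implicit in ``$U=O_{\PP}(n)$''.
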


\begin{proof}
For any positive definite $A$ and nonzero $u$,
\begin{equation*}
  u^\top A^{-2}u
  \ge \frac{\norm{u}_2^6}{(u^\top Au)^2}.
\end{equation*}
This follows by diagonalizing $A$ and applying Jensen's inequality to $x\mapsto x^{-2}$ with weights $v_r^2/\norm{u}_2^2$.

On the Gram event from Lemma~\ref{lem:proper-gram-cutoff},
\begin{equation*}
  \frac n2\le\norm{\psi_j}_2^2\le\frac{3n}{2},
  \qquad j\in\mathcal J_{\ell_n}.
\end{equation*}
On the block-average event from Lemma~\ref{lem:proper-block-average},
\[
  \frac1m\sum_{j\in\mathcal J_{\ell_n}}\psi_j^\top K\psi_j\lesssim n.
\]
Therefore, on the intersection of these high-probability events,
\[
  \frac1m\sum_{j\in\mathcal J_{\ell_n}}\psi_j^\top A\psi_j
  =\frac1m\sum_j\psi_j^\top K\psi_j+\frac1m\sum_j\norm{\psi_j}_2^2
  \lesssim n.
\]
A Markov counting argument gives a set $\mathcal G$ of cardinality at least a fixed fraction of $m$ such that $\psi_j^\top A\psi_j\lesssim n$ for all $j\in\mathcal G$.  Since $\mu_j\asymp n^{-1}$ at the cutoff, the two preceding displays imply
\[
  \mu_j\psi_j^\top A^{-2}\psi_j
  \ge
  \mu_j\frac{\norm{\psi_j}_2^6}{(\psi_j^\top A\psi_j)^2}
  \gtrsim
  \frac1n\frac{n^3}{n^2}
  \gtrsim1.
\]
\end{proof}

\begin{lemma}[Covariance operator norm on the cutoff block]\label{lem:proper-cov-op}
At the cutoff level $\mathcal J=\mathcal J_{\ell_n}$, let
\[
  D_{\mathcal J}:=\diag(\mu_j:j\in\mathcal J),
  \qquad
  \Psi=(e_j(X_i))_{i,j\in\mathcal J}.
\]
The conditional covariance matrix of $(G_j)_{j\in\mathcal J}$ is
\begin{equation*}
  \Sigma_{\mathcal J}
  =\sigma^2D_{\mathcal J}^{1/2}\Psi^\top A^{-2}\Psi D_{\mathcal J}^{1/2}.
\end{equation*}
On the Gram event from Lemma~\ref{lem:proper-gram-cutoff},
\begin{equation*}
  \norm{\Sigma_{\mathcal J}}_{\op}\lesssim\sigma^2.
\end{equation*}
The same bound holds for every principal submatrix, in particular for $\Sigma_{\mathcal G}$.
\end{lemma}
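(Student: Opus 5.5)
The plan is to compute the covariance, bound it by a product of three factors, and control each factor separately. Since $G_j=\sqrt{\mu_j}\,\psi_j^\top A^{-1}\epsilon$ and $\epsilon\sim N(0,\sigma^2 I_n)$ with $A$ symmetric, we have
\[
\EE_\epsilon[G_jG_k\mid X]=\sigma^2\sqrt{\mu_j\mu_k}\,\psi_j^\top A^{-2}\psi_k .
\]
Stacking over $j\in\mathcal J$ gives $\Sigma_{\mathcal J}=\sigma^2D_{\mathcal J}^{1/2}\Psi^\top A^{-2}\Psi D_{\mathcal J}^{1/2}$. Submultiplicativity of the operator norm then yields
\[
\norm{\Sigma_{\mathcal J}}_{\op}\le\sigma^2\,\norm{D_{\mathcal J}}_{\op}\,\norm{\Psi}_{\op}^2\,\norm{A^{-1}}_{\op}^2 .
\]
We bound each factor in turn.

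\begin{itemize}
\item \textbf{The eigenvalue factor.} At the cutoff, $\mu_j\asymp 2^{-\ell_n(2s+d)}\asymp n^{-1}$ uniformly for $j\in\mathcal J$. Hence $\norm{D_{\mathcal J}}_{\op}\lesssim n^{-1}$.
\item \textbf{The design factor.} On the Gram event of Lemma~\ref{lem:proper-gram-cutoff}, $\norm{\Psi^\top\Psi}_{\op}\le \frac{3n}{2}$, so $\norm{\Psi}_{\op}^2\le 3n/2$.
\item \textbf{The resolvent factor.} $A=K+I_n$ with $K$ a positive semidefinite Gram matrix, so $A\succeq I_n$ and $\norm{A^{-1}}_{\op}\le1$.
\end{itemize}

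Multiplying the three bounds gives $\norm{\Sigma_{\mathcal J}}_{\op}\lesssim\sigma^2\cdot n^{-1}\cdot n\cdot 1=\sigma^2$.

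For principal submatrices, a principal submatrix $\Sigma_{\mathcal G}=P^\top\Sigma_{\mathcal J}P$ with $P$ the coordinate inclusion satisfies $\norm{\Sigma_{\mathcal G}}_{\op}\le\norm{\Sigma_{\mathcal J}}_{\op}$. Alternatively, apply Cauchy interlacing, using that $\Sigma_{\mathcal J}\succeq0$.

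There is no serious obstacle here. The only point needing care is that the uniform comparability $\mu_j\asymp n^{-1}$ on the whole block follows from $\mu_j=j^{-(2s+d)/d}$ with $j\asymp 2^{\ell_n d}$ throughout $\mathcal J_{\ell_n}$. The crude bound $A^{-1}\preceq I$ is sufficient, so no finer spectral information on $K$ is needed.
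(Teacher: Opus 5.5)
Your proof is correct and matches the paper's argument. The paper uses $A^{-2}\preceq I_n$ to get $\Sigma_{\mathcal J}\preceq\sigma^2D_{\mathcal J}^{1/2}\Psi^\top\Psi D_{\mathcal J}^{1/2}$, then bounds $\norm{D_{\mathcal J}}_{\op}\lesssim n^{-1}$ and $\norm{\Psi^\top\Psi}_{\op}\lesssim n$ on the Gram event, which is your submultiplicativity bound in Loewner-order form, and it handles principal submatrices the same way.
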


\begin{proof}
Since $K\succeq0$ and $A=K+I_n$, we have $A^{-2}\preceq I_n$.  Hence
\[
  \Sigma_{\mathcal J}
  \preceq
  \sigma^2D_{\mathcal J}^{1/2}\Psi^\top\Psi D_{\mathcal J}^{1/2}.
\]
At the cutoff, $\norm{D_{\mathcal J}}_{\op}\lesssim n^{-1}$, while Lemma~\ref{lem:proper-gram-cutoff} gives $\norm{\Psi^\top\Psi}_{\op}\lesssim n$.  This proves the asserted operator-norm bound.  Principal submatrices of a positive semidefinite matrix have no larger operator norm.
\end{proof}

\begin{lemma}[Sudakov lower bound]\label{lem:proper-sudakov-lower}
There exists a design event $\mathcal E_n$ with $\PP_X(\mathcal E_n)\to1$ such that on $\mathcal E_n$,
\begin{equation*}
  \EE_\epsilon\left[
    \norm{S_n}_{B^s_{\infty,\infty}}^2\mid X
  \right]
  \gtrsim\sigma^2\log n.
\end{equation*}
\end{lemma}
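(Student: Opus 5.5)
We lower bound the Besov norm by the single cutoff level and reduce to the expected maximum of a Gaussian vector. By the wavelet characterization of $B^s_{\infty,\infty}$ and Lemma~\ref{lem:proper-noise-dual},
\[
  \norm{S_n}_{B^s_{\infty,\infty}}
  \gtrsim \max_{j\in\mathcal J_{\ell_n}} 2^{\ell_n(s+d/2)}\abs{\ip{S_n}{e_j}_{L^2}}
  \gtrsim \max_{j\in\mathcal G}\abs{G_j}.
\]
We take $\mathcal E_n$ to be the intersection of the Gram event of Lemma~\ref{lem:proper-gram-cutoff}, the event $U\lesssim n$ from Lemma~\ref{lem:proper-block-average}, and the event of Lemma~\ref{lem:proper-diag-lower}; all three have probability tending to one. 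Conditionally on $X\in\mathcal E_n$, the vector $(G_j)_{j\in\mathcal G}$ is a centered Gaussian with covariance $\Sigma_{\mathcal G}$. Its diagonal satisfies $\Sigma_{jj}\gtrsim\sigma^2$ by Lemma~\ref{lem:proper-diag-lower}, and its operator norm satisfies $\norm{\Sigma_{\mathcal G}}_{\op}\lesssim\sigma^2$ by Lemma~\ref{lem:proper-cov-op}. By Jensen, $\EE_\epsilon[\max|G_j|^2\mid X]\ge(\EE_\epsilon[\max_j G_j\mid X])^2$, so it suffices to show $\EE\max_{j\in\mathcal G}G_j\gtrsim\sigma\sqrt{\log|\mathcal G|}$. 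Since $|\mathcal G|\gtrsim m_n\asymp n^{d/(2s+d)}$, we have $\log|\mathcal G|\asymp\log n$.

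The main obstacle is turning the operator-norm bound into pairwise separation, which Sudakov's minoration requires. We have $\EE(G_j-G_k)^2=\Sigma_{jj}+\Sigma_{kk}-2\Sigma_{jk}$, so we need $|\Sigma_{jk}|$ small for most pairs. The plan is a counting argument. The bound $\norm{\Sigma_{\mathcal G}}_{\op}\le C\sigma^2$ gives $\sum_k\Sigma_{jk}^2=\norm{\Sigma e_j}_2^2\le C^2\sigma^4$ for each row $j$. Hence each row contains at most $C^2/\eta^2$ indices $k$ with $|\Sigma_{jk}|\ge\eta\sigma^2$. Choose $\eta=c_0/4$, where $c_0\sigma^2$ is the diagonal lower bound. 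Build a graph on $\mathcal G$ with an edge whenever $|\Sigma_{jk}|\ge\eta\sigma^2$; its maximum degree is at most a constant $D$. Greedily extract an independent set $\mathcal G'\subset\mathcal G$ with $|\mathcal G'|\ge|\mathcal G|/(D+1)\gtrsim m_n$. For distinct $j,k\in\mathcal G'$,
\[
  \EE(G_j-G_k)^2\ge 2c_0\sigma^2-2\eta\sigma^2\gtrsim\sigma^2 .
\]

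Sudakov's minoration now gives
\[
  \EE_\epsilon\Bigl[\max_{j\in\mathcal G'}G_j\Bigm| X\Bigr]\gtrsim\sigma\sqrt{\log|\mathcal G'|}\gtrsim\sigma\sqrt{\log n}.
\]
Squaring via Jensen yields $\EE_\epsilon[\norm{S_n}^2_{B^s_{\infty,\infty}}\mid X]\gtrsim\sigma^2\log n$ on $\mathcal E_n$. Only the separation step needs care; everything else is direct invocation of the preceding lemmas.
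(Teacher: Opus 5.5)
Your proposal is correct and follows essentially the same route as the paper. You restrict to the cutoff level and use the row bound $\norm{\Sigma e_j}_2\le\norm{\Sigma_{\mathcal G}}_{\op}\lesssim\sigma^2$ to show each index is strongly correlated with only boundedly many others. You then extract a $\sigma$-separated subset of size $\gtrsim m_n$ and finish with Sudakov's minoration and Jensen; your greedy independent-set extraction is just a rephrasing of the paper's argument that small $d_X$-balls contain boundedly many points.
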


\begin{proof}
On the intersection of the high-probability events in Lemmas~\ref{lem:proper-gram-cutoff}, \ref{lem:proper-block-average}, \ref{lem:proper-diag-lower}, and \ref{lem:proper-cov-op}, there is a set $\mathcal G\subset\mathcal J_{\ell_n}$ with
\[
  |\mathcal G|\gtrsim m_n\asymp n^{d/(2s+d)},
\]
and the conditional covariance matrix of $(G_j)_{j\in\mathcal G}$ satisfies
\[
  \Sigma_{jj}(X)\gtrsim\sigma^2,
  \qquad
  \norm{\Sigma_{\mathcal G}(X)}_{\op}\lesssim\sigma^2.
\]
Let $d_X(j,k)^2:=\EE_\epsilon[(G_j-G_k)^2\mid X]$ be the canonical metric.  A finite-dimensional packing argument gives a constant $c>0$ such that
\begin{equation*}
  \mathcal M(\mathcal G,d_X,c\sigma)
  \gtrsim |\mathcal G|.
\end{equation*}
Indeed, if $d_X(j,k)<\sqrt a\,\sigma$ and $\Sigma_{jj},\Sigma_{kk}\ge a\sigma^2$, then $\Sigma_{jk}\ge a\sigma^2/2$.  Since $\norm{\Sigma e_j}_2\le B\sigma^2$, each such metric ball contains at most $(2B/a)^2$ points, and a maximal separated set has cardinality comparable to $|\mathcal G|$.

Sudakov's minoration applied conditionally on $X$ yields
\[
  \EE_\epsilon\sup_{j\in\mathcal G}G_j
  \gtrsim
  \sigma\sqrt{\log\mathcal M(\mathcal G,d_X,c\sigma)}
  \gtrsim\sigma\sqrt{\log n}.
\]
By Lemma~\ref{lem:proper-noise-dual} and the wavelet norm characterization,
\[
  \norm{S_n}_{B^s_{\infty,\infty}}
  \gtrsim
  \max_{j\in\mathcal J_{\ell_n}}G_j
  \ge
  \max_{j\in\mathcal G}G_j.
\]
Therefore
\[
  \EE_\epsilon\left[\norm{S_n}_{B^s_{\infty,\infty}}\mid X\right]
  \gtrsim\sigma\sqrt{\log n},
\]
and Jensen's inequality gives the claimed conditional lower bound.
\end{proof}

\begin{lemma}[Conditional noise upper bound]\label{lem:proper-noise-upper}
For every fixed design $X=(X_1,\ldots,X_n)$,
\begin{equation*}
  \EE_\epsilon\left[
    \norm{S_n}_{B^s_{\infty,\infty}}^2\mid X
  \right]
  \lesssim \sigma^2\log n.
\end{equation*}
\end{lemma}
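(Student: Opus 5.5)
We bound the Hölder--Zygmund norm of $S_n$ through the wavelet characterization and Lemma~\ref{lem:proper-noise-dual}:
\[
  \norm{S_n}_{B^s_{\infty,\infty}}\lesssim \sup_{\ell}\max_{j\in\mathcal J_\ell}|G_j|,
\]
where the scaling block is absorbed into the lowest level. Conditionally on $X$, each $G_j$ is a centered Gaussian with variance
\[
  v_j:=\sigma^2\mu_j\psi_j^\top A^{-2}\psi_j .
\]
The plan has two parts: show $v_j$ decays geometrically in $\ell(j)$ away from the cutoff level $\ell_n$, and then apply a union bound with Gaussian tails. This uses only $A^{-2}$ and $K$ bounds that hold for every design, so the result is deterministic in $X$.

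\textbf{Step 1 (variance bounds).} We use two bounds that hold for every fixed $X$.

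First, since $A=K+I_n\succeq I_n$, we have $A^{-2}\preceq I_n$. Hence $v_j\le\sigma^2\mu_j\norm{\psi_j}_2^2\le \sigma^2\mu_j n\norm{e_j}_\infty^2$. This gives
\[
  v_j\lesssim \sigma^2 n\,2^{-\ell(2s+d)}2^{\ell d}=\sigma^2 n2^{-2\ell s},
\]
which is small for $\ell>\ell_n$ but not useful below the cutoff.

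Second, since $K\succeq \mu_j\psi_j\psi_j^\top$ by the Mercer expansion, we have $A\succeq I+\mu_j\psi_j\psi_j^\top$. A direct computation with the vector $u=\sqrt{\mu_j}\psi_j$ gives $u^\top A^{-2}u\le u^\top(I+uu^\top)^{-2}u$. Strictly, $A^{-2}\preceq (I+uu^\top)^{-2}$ is not operator-monotone in general, so I would instead use $u^\top A^{-1}u\le u^\top(I+uu^\top)^{-1}u=\frac{|u|^2}{1+|u|^2}$ together with $A^{-2}\preceq A^{-1}$ (valid since $A\succeq I$). Either way,
\[
  v_j\le\sigma^2\mu_j\psi_j^\top A^{-1}\psi_j\le\sigma^2 .
\]
So $v_j\le\sigma^2$ always, and $v_j\lesssim\sigma^2 n2^{-2\ell s}=\sigma^2 2^{-2(\ell-\ell_n)s}$ for $\ell\ge\ell_n$.

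\textbf{Step 2 (union bound with Gaussian maximal inequality).} We use $\EE\max_{j\le N}Z_j^2\lesssim (\max_j \Var Z_j)\log(N+1)$ for arbitrary (dependent) centered Gaussians, applied level by level with weights.

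For levels $\ell\le\ell_n$ there are $\sum_{\ell\le\ell_n}m_\ell\lesssim 2^{\ell_n d}\asymp n^{d/(2s+d)}$ indices, each with variance at most $\sigma^2$. Their maximum squared therefore has expectation $\lesssim\sigma^2\log n$.

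For $\ell=\ell_n+r$ with $r\ge1$, set $t_r:=2^{-2rs}$. The tail estimate is
\[
  \PP\Bigl(\max_{\mathcal J_\ell}G_j^2>u\Bigr)\le 2m_\ell\exp\bigl(-u/(2C\sigma^2t_r)\bigr).
\]
Sum this over $r\ge1$ and integrate in $u$ from a threshold $u_0=C'\sigma^2\log n$. At $u\ge u_0$, the exponent $u/(\sigma^2t_r)\ge 2^{2rs}C'\log n$ dominates $\log m_\ell\lesssim (\ell_n+r)d$, so the summed tail mass above $u_0$ is $O(\sigma^2)$. Combining the two ranges gives $\EE_\epsilon[\sup_j G_j^2\mid X]\lesssim\sigma^2\log n$, which is the claim.

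\textbf{Main obstacle.} The only delicate point is the uniform bound $v_j\le\sigma^2$ below the cutoff, which needs the rank-one domination argument with $A^{-2}\preceq A^{-1}$ rather than the crude bound $A^{-2}\preceq I$. The naive bound fails there because it gives $n2^{-2\ell s}\gg1$ for $\ell<\ell_n$. Above the cutoff the crude bound suffices, and the geometric decay absorbs the growing cardinalities $m_\ell$.
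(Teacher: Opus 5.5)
\paragraph{A step that fails: where the crude bound becomes useful.} Your overall architecture is the paper's: a variance envelope $\min\{1,n2^{-2s\ell}\}$ from a rank-one argument plus $A^{-2}\preceq I_n$, followed by a Gaussian maximal inequality split into low and high frequencies. Your uniform bound via $A\succeq I_n+uu^\top$, operator monotonicity of inversion, and $A^{-2}\preceq A^{-1}$ is a valid substitute for the paper's Sherman--Morrison computation.

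The gap is the identity $n2^{-2\ell s}=2^{-2(\ell-\ell_n)s}$. The cutoff is defined by $2^{\ell_n(2s+d)}\asymp n$, so in fact
\[
  n2^{-2s\ell}\asymp 2^{\ell_n d}\,2^{-2s(\ell-\ell_n)}\asymp m_n\,2^{-2s(\ell-\ell_n)} .
\]
At $\ell=\ell_n$ the crude bound therefore gives only $v_j\lesssim\sigma^2 n^{d/(2s+d)}$. It does not fall to $O(\sigma^2)$ until $\ell\approx\log_2 n/(2s)$, which lies strictly above $\ell_n\approx\log_2 n/(2s+d)$.

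Consequently your tail estimate with $t_r=2^{-2rs}$ is unsupported on the whole range $\ell_n<\ell<\log_2 n/(2s)$. There it understates the only deterministic variance bound you have by a factor up to $n^{d/(2s+d)}$. Because the lemma is for every fixed design, you cannot appeal to $\norm{\psi_j}_2^2\approx n$ to do better. For the same reason, your statements that the crude bound is small for $\ell>\ell_n$ and that it ``suffices above the cutoff'' are false.

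\paragraph{The repair.} It uses only ingredients you already have, and it is exactly the paper's split: put the low/high boundary at $L_n'=\lceil\log_2 n/(2s)\rceil$ instead of $\ell_n$.
\begin{itemize}
\item Levels $\ell\le L_n'$: all indices have $v_j\le\sigma^2$, and there are $\lesssim 2^{L_n'd}\asymp n^{d/(2s)}$ of them, still polynomially many. The Gaussian maximal inequality therefore still gives $\sigma^2\log n$.
\item Levels $\ell>L_n'$: here $n2^{-2s\ell}\le 2^{-2s(\ell-L_n')}$ genuinely holds. Either your tail integration or the paper's cruder bound
\[
  \sum_{\ell>L_n'}\EE_\epsilon\Bigl[\max_{j\in\mathcal J_\ell}G_j^2\mid X\Bigr]\lesssim\sigma^2 n\sum_{\ell>L_n'}2^{-2s\ell}(1+\ell)\lesssim\sigma^2\log n
\]
finishes the argument.
\end{itemize}
Equivalently, you can keep your split at $\ell_n$ but replace $t_r$ by $\min\{1,2^{\ell_n d-2rs}\}$. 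The first $O(\log n)$ values of $r$ then each contribute $\lesssim\sigma^2 n^{d/(2s)}n^{-C'/(2C)}$, which is negligible once $C'$ is large.

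\paragraph{A smaller point in Step 1.} Delete the ``either way''. The inequality $u^\top A^{-2}u\le u^\top(I+uu^\top)^{-2}u$ is false in general: take $A=\diag(1,\beta)+uu^\top$ with $u=(1,c)^\top$, $\beta\to\infty$ and $c$ large. Then the left side tends to $1/4$ while the right side is $(1+c^2)/(2+c^2)^2$. Only your second route is valid.
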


\begin{proof}
For $j\in\mathcal J_\ell$, define
\begin{equation*}
  q_j(X):=\mu_j\psi_j^\top A^{-2}\psi_j.
\end{equation*}
Then
\[
  \Var_\epsilon(G_j\mid X)=\sigma^2q_j(X).
\]
We first prove the deterministic variance envelope
\begin{equation*}
  q_j(X)\lesssim \min\{1,n2^{-2s\ell}\},
  \qquad j\in\mathcal J_\ell.
\end{equation*}

Fix $j$.  By the positive semidefinite remainder property,
\[
  K=\mu_j\psi_j\psi_j^\top+K_{-j},
  \qquad
  K_{-j}\succeq0.
\]
Let
\[
  B:=K_{-j}+I_n.
\]
Then $B\succeq I_n$ and
\[
  A=B+\mu_j\psi_j\psi_j^\top.
\]
Sherman--Morrison gives
\begin{equation*}
  A^{-1}\psi_j
  =\frac{B^{-1}\psi_j}{1+\mu_j\psi_j^\top B^{-1}\psi_j}.
\end{equation*}
Set $t:=\mu_j\psi_j^\top B^{-1}\psi_j\ge0$.  Since $B\succeq I_n$, we have $B^{-2}\preceq B^{-1}$, and thus
\[
\begin{aligned}
  q_j
  &=\mu_j\norm{A^{-1}\psi_j}_2^2\\
  &=\frac{\mu_j\psi_j^\top B^{-2}\psi_j}{(1+t)^2}\\
  &\le \frac{\mu_j\psi_j^\top B^{-1}\psi_j}{(1+t)^2}
   =\frac{t}{(1+t)^2}
   \le \frac14.
\end{aligned}
\]
This gives $q_j\lesssim1$ uniformly in the fixed design.  Also, $A=K+I_n\succeq I_n$, so $A^{-2}\preceq I_n$ and
\[
  q_j\le\mu_j\norm{\psi_j}_2^2
  \le \mu_j n\norm{e_j}_\infty^2
  \lesssim 2^{-\ell(2s+d)}n2^{\ell d}
  =n2^{-2s\ell}.
\]
Combining the two estimates proves the variance envelope.

Let
\[
  M_\ell:=\max_{j\in\mathcal J_\ell}|G_j|.
\]
We use the elementary Gaussian maximum bound: if $Z_1,\ldots,Z_N$ are centered Gaussian variables, not necessarily independent, and $\max_i\Var(Z_i)\le v^2$, then
\begin{equation*}
  \EE\max_{1\le i\le N}|Z_i|^2\lesssim v^2\log(eN).
\end{equation*}
This follows by the union bound and integration of Gaussian tails.  From the variance envelope and $|\mathcal J_\ell|\lesssim2^{\ell d}$,
\begin{equation*}
  \EE_\epsilon[M_\ell^2\mid X]
  \lesssim
  \sigma^2\min\{1,n2^{-2s\ell}\}(1+\ell).
\end{equation*}
Choose
\[
  L_n':=\left\lceil\frac{\log_2 n}{2s}\right\rceil.
\]
For the low-frequency part, the number of coefficients with $\ell\le L_n'$ satisfies
\[
  N_{\le L_n'}:=\sum_{\ell\le L_n'}|\mathcal J_\ell|
  \lesssim 2^{L_n'd}\asymp n^{d/(2s)}.
\]
Since the variance envelope is bounded by $C\sigma^2$ on these levels, the Gaussian maximum bound applied to all low-frequency coordinates at once gives
\begin{equation*}
  \EE_\epsilon\left[
  \max_{\ell\le L_n'}\max_{j\in\mathcal J_\ell}|G_j|^2\mid X
  \right]
  \lesssim \sigma^2\log n.
\end{equation*}
For $\ell>L_n'$, the layerwise maximum bound gives
\[
  \EE_\epsilon[M_\ell^2\mid X]
  \lesssim \sigma^2 n2^{-2s\ell}(1+\ell),
\]
and therefore
\begin{equation*}
\begin{aligned}
  \EE_\epsilon\left[\sup_{\ell>L_n'}M_\ell^2\mid X\right]
  &\le \sum_{\ell>L_n'}\EE_\epsilon[M_\ell^2\mid X]\\
  &\lesssim \sigma^2 n\sum_{\ell>L_n'}2^{-2s\ell}(1+\ell)\\
  &\lesssim \sigma^2\log n.
\end{aligned}
\end{equation*}
The finite scaling block contributes only $O(\sigma^2)$ because $q_j\le1/4$.  Lemma~\ref{lem:proper-noise-dual} and the two preceding upper bounds yield the fixed-design upper bound.
\end{proof}

\begin{proof}[Proof of Theorem~\ref{thm:proper-noise-log}]
Lemma~\ref{lem:proper-sudakov-lower} gives a design event $\mathcal E_n$ with probability tending to one such that
\[
  \EE_\epsilon\left[\norm{S_n}_{B^s_{\infty,\infty}}^2\mid X\right]
  \gtrsim \sigma^2\log n
\]
on $\mathcal E_n$.  Therefore
\[
\begin{aligned}
  \EE\norm{S_n}_{B^s_{\infty,\infty}}^2
  &=\EE_X\EE_\epsilon\left[\norm{S_n}_{B^s_{\infty,\infty}}^2\mid X\right]\\
  &\ge
  \EE_X\left[\one_{\mathcal E_n}
  \EE_\epsilon\left[\norm{S_n}_{B^s_{\infty,\infty}}^2\mid X\right]
  \right]\\
  &\gtrsim \sigma^2\log n\,\PP_X(\mathcal E_n),
\end{aligned}
\]
which proves the lower bound for all sufficiently large $n$.  Lemma~\ref{lem:proper-noise-upper} gives the deterministic conditional upper bound, and taking expectation over $X$ proves the upper bound.
\end{proof}

\bibliographystyle{plain}
\bibliography{references}

@article{zhang2024optimality,
  title={On the optimality of misspecified spectral algorithms},
  author={Zhang, Haobo and Li, Yicheng and Lin, Qian},
  journal={Journal of Machine Learning Research},
  volume={25},
  number={188},
  pages={1--50},
  year={2024}
}

@article{hamm2021adaptive,
  title={Adaptive learning rates for support vector machines working on data with low intrinsic dimension},
  author={Hamm, Thomas and Steinwart, Ingo},
  journal={The Annals of Statistics},
  volume={49},
  number={6},
  pages={3153--3180},
  year={2021},
  publisher={Institute of Mathematical Statistics}
}

@book{GineNickl2016,
  author    = {Gin{\'e}, Evarist and Nickl, Richard},
  title     = {Mathematical Foundations of Infinite-Dimensional Statistical Models},
  publisher = {Cambridge University Press},
  year      = {2016},
  series    = {Cambridge Series in Statistical and Probabilistic Mathematics},
  doi       = {10.1017/CBO9781107337862}
}

@article{chhor2022benign,
  title={Benign overfitting and adaptive nonparametric regression},
  author={Chhor, Julien and Sigalla, Suzanne and Tsybakov, Alexandre B},
  journal={arXiv preprint arXiv:2206.13347},
  year={2022}
}

@article{stone1982optimal,
  title={Optimal global rates of convergence for nonparametric regression},
  author={Stone, Charles J},
  journal={The annals of statistics},
  pages={1040--1053},
  year={1982},
  publisher={JSTOR}
}

@book{steinwart2008support,
  title={Support vector machines},
  author={Steinwart, Ingo and Christmann, Andreas},
  year={2008},
  publisher={Springer Science \& Business Media}
}

@article{eberts2011optimal,
  title={Optimal learning rates for least squares SVMs using Gaussian kernels},
  author={Eberts, Mona and Steinwart, Ingo},
  journal={Advances in neural information processing systems},
  volume={24},
  year={2011}
}

\end{document}